\icmltitlerunning{Fairness for Image Generation with Uncertain Sensitive Attributes}
\newcommand{\define}[4][ignore]{%
  \ifstrequal{#1}{ignore}{}{
  \@namedef{thmtitle@#2}{#1}}%
  \@namedef{thm@#2}{#4}%
  \@namedef{thmtypen@#2}{lemma}%
  \newtheorem{thmtype@#2}[theorem]{#3}%
  \newtheorem*{thmtypealt@#2}{#3~\ref{#2}}%
}
\newcommand{\state}[1]{%
  \@namedef{curthm}{#1}
  \@ifundefined{thmtitle@#1}{
  \begin{thmtype@#1}
    }{
  \begin{thmtype@#1}[\@nameuse{thmtitle@#1}]
  }
    \label{#1}
    \@nameuse{thm@#1}
  \end{thmtype@#1}
  \@ifundefined{thmdone@#1}{
  \@namedef{thmdone@#1}{stated}%
  }{}
}
\newcommand{\restate}[1]{%
  \@namedef{curthm}{#1}
  \@ifundefined{thmtitle@#1}{
    \begin{thmtypealt@#1}
    }{
  \begin{thmtypealt@#1}[\@nameuse{thmtitle@#1}]
  }
    \@nameuse{thm@#1}
  \end{thmtypealt@#1}
  \@ifundefined{thmdone@#1}{
  \@namedef{thmdone@#1}{stated}%
  }{}
}
\newcommand{\thmlabel}[1]{
  \@ifundefined{thmdone@\@nameuse{curthm}}{\label{#1}
    }{\tag*{\eqref{#1}}}
}
\newtheorem{theorem}{Theorem}[section]
\newtheorem{lemma}[theorem]{Lemma}
\newtheorem{example}[theorem]{Example}
\newtheorem{definition}[theorem]{Definition}
\newcommand{\norm}[1]{\|#1\|}
\newcommand{\wh}{\widehat}
\newcommand{\R}{\mathbb{R}}
\newcommand{\RN}[1]{%
  \textup{\uppercase\expandafter{\romannumeral#1}}%
}
\newcommand{\vertiii}[1]{{\left\vert\kern-0.25ex\left\vert\kern-0.25ex\left\vert #1 
		\right\vert\kern-0.25ex\right\vert\kern-0.25ex\right\vert}}
\DeclareMathOperator*{\E}{\mathbb{E}}
\newcommand{\cA}{\mathcal A}
\newcommand{\cB}{\mathcal B}
\newcommand{\cN}{\mathcal N}
\begin{document}

\twocolumn[
\icmltitle{Fairness for Image Generation with Uncertain Sensitive
Attributes}

\icmlsetsymbol{equal}{*}

\begin{icmlauthorlist}
\icmlauthor{Ajil Jalal}{equal,ece}
\icmlauthor{Sushrut Karmalkar}{equal,cs}
\icmlauthor{Jessica Hoffmann}{equal,cs}
\icmlauthor{Alexandros G. Dimakis}{ece}
\icmlauthor{Eric Price}{cs}
\end{icmlauthorlist}

\icmlaffiliation{cs}{Department of Computer Science, The University of
Texas at Austin}
\icmlaffiliation{ece}{Department of Electrical and Computer
Engineering, The University of Texas at Austin}

\icmlcorrespondingauthor{Ajil Jalal}{ajiljalal@utexas.edu}

\icmlkeywords{Algorithmic Fairness, Image Super-resolution, Generative
Priors, Deep Generative Models, Langevin Dynamics, Posterior Sampling}

\vskip 0.3in
]



\printAffiliationsAndNotice{\small \icmlEqualContribution. Our code
and models are available at:
\url{https://github.com/ajiljalal/code-cs-fairness}.}

\begin{abstract}
  This work tackles the issue of fairness in the context of generative
  procedures, such as image super-resolution, which entail different
  definitions from the standard classification setting. Moreover,
  while traditional group fairness definitions are typically defined
  with respect to specified protected groups -- camouflaging the
  fact that these groupings are artificial and carry historical and
  political motivations -- we emphasize that there are no  ground
  truth identities. For instance, should South and East Asians be
  viewed as a single group or separate groups?  Should we consider one
  race as a whole or further split by gender?  Choosing which groups
  are valid and who belongs in them is an impossible dilemma and being
  ``fair'' with respect to Asians may require being ``unfair'' with
  respect to South Asians.  This motivates the introduction of
  definitions that allow algorithms to be \emph{oblivious} to the
  relevant groupings. 

  We define several intuitive notions of group fairness and study
  their incompatibilities and trade-offs. We show that the natural
  extension of demographic parity is strongly dependent on the
  grouping, and \emph{impossible} to achieve obliviously.  On the
  other hand, the conceptually new definition we introduce,
  Conditional Proportional Representation, can be achieved obliviously
  through Posterior Sampling.  Our experiments validate our
  theoretical results and achieve fair image reconstruction using
  state-of-the-art generative models.
\end{abstract}

\section{Introduction}

\begin{figure*}[t]
	\centering
		\includegraphics[width=0.9\textwidth]{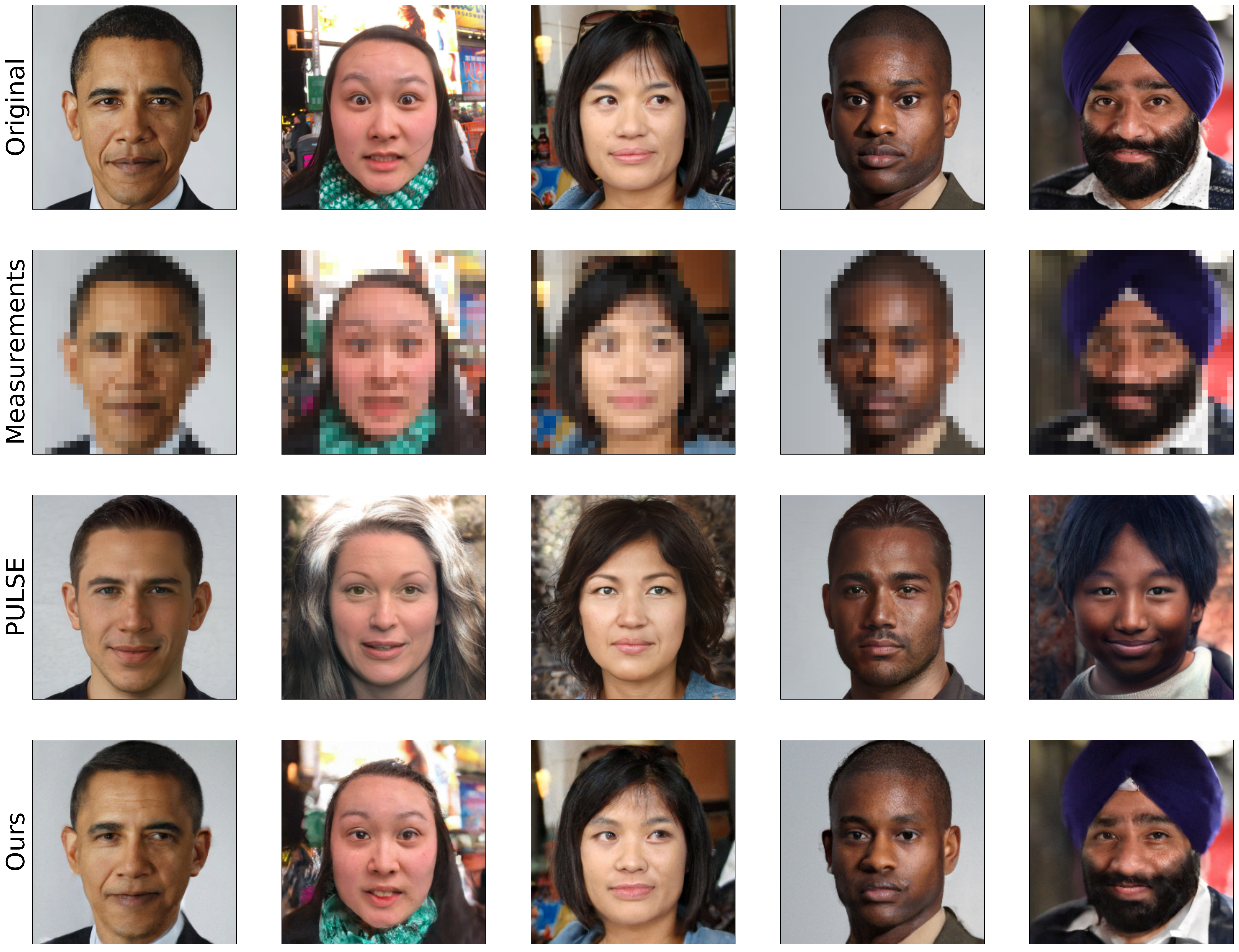}
		\vspace*{-3mm}
    \caption{Super-resolution reconstructions on Barack Obama and four faces from the FFHQ dataset. The top row shows
    original images, the second row shows what the algorithms observe: blurry measurements after downsampling by $32\times$ in each dimension. The third row shows reconstructions by PULSE, and the last row shows reconstructions by Posterior Sampling via Langevin dynamics, the algorithm we are advocating for.
    These faces were chosen to compare performance on various ethnicities. Please see Appendix~\ref{app:ffhq} for images chosen at random from the dataset.}
    \label{fig:reconstr-ffhq}
\end{figure*}

Fairness, accountability, and transparency have taken a front-row seat
in the machine learning community. Numerous recent controversies have
erupted over how current machine learning systems already in use can
be  racist \cite{simonite2018comes}, sexist \cite{kay2015unequal},
homophobic \cite{morse2017google}, or all of the above
\cite{moore2016microsoft}. In a recent controversy, a low-resolution
image of Barack Obama was put into PULSE, a super-resolution
generative model~\cite{pulse}, but the resulting image was of a
distinctly White man.  While we generally have to be careful when
identifying the race of a person that does not exist, such as the one
represented by the generated image, multiple other reconstructions by
PULSE strongly suggest that this algorithm contributes to the systemic
bias against people of color.

Accuracy of representation as a fairness notion is a significant leap
from the more traditional classification setting, in which we require
some form of independence (or conditional independence) between the
sensitive attributes and the algorithm prediction.  In the context of
image reconstruction, the output itself can be considered as having
sensitive attributes, and we want the sensitive attributes of the
input to match the sensitive attributes of the output -- which is
fundamentally different from an independence condition. This leads us
to introduce and discuss new fairness definitions, specific to the
field of image generation, reconstruction, denoising and
super-resolution.

In light of the ``White Obama" controversy~\cite{pulse}, it has been
suggested that reconstruction algorithms are biased because the
datasets are not representative of the true population distribution.
While it is true that the datasets are biased
\cite{buolamwini2018,khosla2012undoing}, current algorithms also play
their part in widening this gap
\cite{wang2019balanced,terhorst2020face}, such that majority classes
get overrepresented, and minorities get further underrepresented.
Indeed, when applying PULSE~\cite{pulse} to an unbalanced dataset with
80\% dogs (majority class) and 20\% cats, we observe that 80\% of cats
are mistakenly reconstructed as dogs, while only 2\% of dogs are
reconstructed as cats (see Figure \ref{fig:cat20dog80}).   When cats
are the 80\% majority, the situation reverses to 1\% and 98\%
mistakes, respectively (see Figure~\ref{fig:cat80dog20}).

There is a simple intuitive reason why reconstruction algorithms
designed to maximize accuracy will increase bias. Assume we observe a
noisy version $y$ of an image $x^*$ that is either a dog or a cat.
Assume cats are the minority, with the prior  $\Pr(x^* \in
\text{Dog})=0.8$. Further, assume that the measurements are always
noisy and cannot definitively identify the species, so cat-like
measurements are such that $p( y \mid x^* \in \text{Cat})/p( y \mid
x^* \in \text{Dog}) \leq 2$. Using Bayes, the posterior is
\begin{align*} 
\Pr(x^* \in \text{Dog} \mid y) &= p(y \mid x^* \in \text{Dog}) \cdot \frac{\Pr(x^* \in \text{Dog})}{p(y)} \\
&\geq 1 \cdot \frac{0.8}{0.8 \cdot 1 + 0.2 \cdot 2} \\
&= 2/3.
\end{align*}
Therefore, regardless of the measurement, an algorithm that maximizes
accuracy \textit{will always produce images of dogs}.

This issue relates to a rich area of work on fairness in machine
learning, including for classification or generation without
measurements (see Section~\ref{sec:relatedwork} for an overview).
However, to the best of our knowledge, previous approaches always
assume that the sensitive attributes are well-defined and unambiguous.
While this assumption might hold for cats and dogs, as
\cite{benthall2019racial,hanna2020towards} emphasize, race cannot be
treated in the same way. First, it is unclear when to include
subgroups within the larger group or when to treat them separately
(for instance, when to consider South Asians as their own subgroup, or
as Asians). This has major implications, as choosing which groups
exist and what sensitive attributes are valid can already widen
existing discrimination, as the long line of research on
intersectionality shows. Second, even if we could decide on which
groups are relevant, races are multidimensional and cannot be reduced
to a simple categorical value: studies show that we can arrive at
inconsistent conclusions about the same data depending on how race is
measured (e.g. self-reported or observed) \cite{howell2017}. Our work
therefore focuses on moving away from classifying people into
partitions.

\paragraph{Problem Setting.}  Suppose that we have a distribution of
users $x^*$; each user $x^*$ is observed through some lossy
observation process to produce $y$ (e.g., a low-resolution image); and
our reconstruction algorithm produces $\wh{x}$ from $y$.  We are
concerned about fairness with respect to a collection of protected
groups  $C = \{c_1, \dotsc, c_k\}$.  Our setting therefore includes,
but is not limited to, the special case in which $C$ is a
partition\footnote{For simplicity of notation, each group $c_i$
contains both \emph{people} $x^*$ and \emph{images} $\wh{x}$.}.

The fairness concern we consider is that of \emph{representation}:
when users in each protected group use the algorithm, does the result
adequately represent them and their group?  When the observation
process $y$ is significantly lossy, there inevitably will be
``representation errors'' where a member of one group is reconstructed
as being in a different group.  How should we determine if the errors
are equitable?

\paragraph{Our Contributions: Fairness Definitions.}

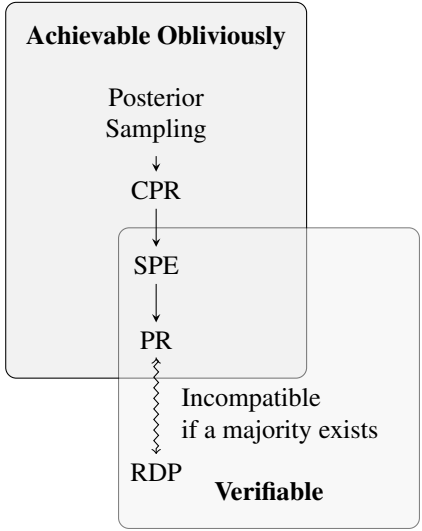
\begin{figure}
    \centering
    \begin{tikzpicture}
\draw[fill=gray!10, rounded corners] (-2,-2.5) rectangle (2, 2.5);
\draw[fill=gray!10, opacity=0.5, rounded corners] (-0.5,-4.5) rectangle (3.5, -0.5);

\node (PS) at (0,1) {\begin{tabular}{c}
Posterior \\ Sampling \end{tabular}};     
\node (cpr) at (0,0) {CPR};
\node (spe) at (0,-1) {SPE};
\node (pr) at (0, -2) {PR};
\node (rdp) at (0, -3.75) {RDP};

\filldraw[black] (0, 1.75) circle (0pt) node[anchor=south] {\textbf{Achievable Obliviously}};
\filldraw[black] (1.5, -4.25) circle (0pt) node[anchor=south] {\textbf{Verifiable}};
\draw[->, >=stealth] (PS) -- (cpr);
\draw[->, >=stealth] (spe) -- (pr);
\draw[->, >=stealth] (cpr) -- (spe);
\draw[<->, decorate, decoration={zigzag,
    segment length=4,
    amplitude=.9,post=lineto,
    post length=2pt }]  (pr) -- (rdp); 
\filldraw[black] (0, -3.0) circle (0pt) node[anchor=west] {\begin{tabular}{l}Incompatible\\if a majority exists\end{tabular}};

\end{tikzpicture}
    \caption{CPR, SPE and PR are achievable obliviously by Posterior
    Sampling. However, RDP cannot be achieved obliviously, and if a majority group exists it cannot be achieved simultaneously with PR.} 
    \label{fig:relationships_properties}
\end{figure}

We introduce definitions for some natural notions of fairness in
reconstruction.  One is that the average representation rate should be
independent of the group: 
\begin{align}
  \Pr(\wh{x} \in c_i \mid x^* \in c_i) \tag{RDP}
\end{align}
is the same value for all $i \in [k]$.  We call this
\emph{Representation Demographic Parity} (RDP), by analogy to the
binary classification setting, where Demographic Parity means that
$\Pr(L=1 \mid x^* \in c_i)$ is fixed. The difference here is that the
``good'' outcome $(\wh{x} \in c_i)$ is different for each group, while
typically in classification the ``good'' outcome (where, e.g., $L = 1$
means ``offer a loan'') is the same across groups.  RDP is simply
requesting that the reconstructions have the same error rates across
groups.

An alternative definition is that the demographics of the output
should match those of the input:
\begin{align}
  \Pr(\wh{x} \in c_i) = \Pr(x^* \in c_i)~~\forall i.\tag{PR}
\end{align}
We call this \emph{Proportional Representation} (PR).  It simply says
that the reconstruction process should not introduce bias in the
distribution for or against any group.

Unfortunately, these two definitions are often \emph{incompatible}.
We show in Proposition~\ref{prop:rdp_pr_incompat} that, whenever a
majority group exists and the measurements can confuse it with other
groups, no algorithm can achieve both RDP and PR.

One weakness of both PR and RDP is that they only consider the
\emph{global} behavior of the reconstruction.  But individual users
want to be represented well when they use the system, and may not be
mollified by the knowledge that many other members of their group are
being represented.  On the other hand, some  images are genuinely
harder to reconstruct accurately, so expecting equal representation
accuracy/RDP for every user would strongly limit overall accuracy.
Our solution is to extend PR by incorporating the measurement process:
\begin{align}
  \Pr(\wh{x} \in c_i \mid y) = \Pr(x^* \in c_i \mid y)~~\forall i, y. \tag{CPR}
\end{align}
We call this \emph{Conditional Proportional Representation} (CPR).
The idea is that the population of users with each given $y$ should
have fair treatment (in the sense of PR).  Of course, CPR implies PR
by averaging over $y$.

Note that CPR implies that the reconstruction process must be
\emph{randomized}, not deterministic.  This has other benefits: if the
user is not satisfied with the result, they can rerun the algorithm
until they get a result that represents them.  Users can also get a
collection of $\wh{x}_i$ to observe the diversity of possible
reconstructions.

\begin{figure*}[t]
	\centering
		\includegraphics[width=0.98\textwidth]{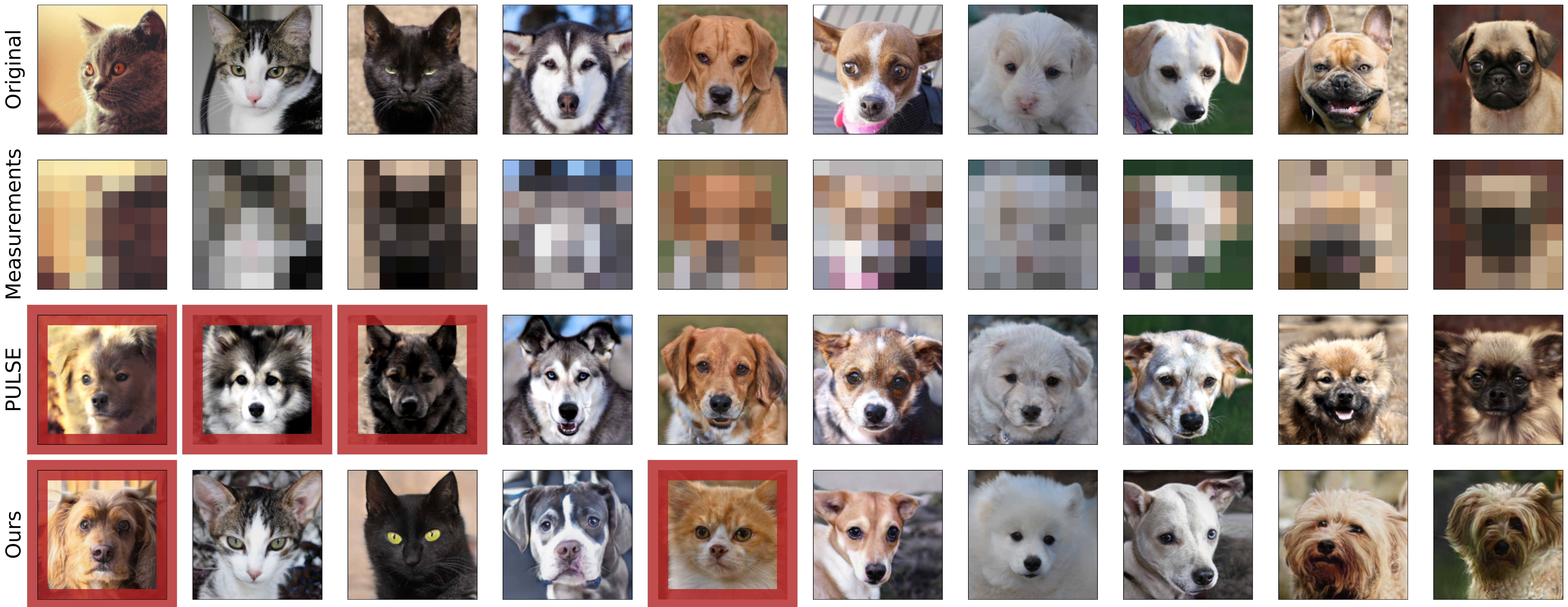}
		\vspace*{-3mm}
		\caption{Super-resolution on the AFHQ cats \& dogs dataset using
		StyleGAN2 trained on \textbf{20\% cats} and \textbf{80\% dogs}.
		The rows show, from top to bottom 1)  original images 2)
		measurements after downsampling $64\times$ in each dimension 3)
		reconstructions by PULSE 4) reconstructions by Posterior Sampling.
		The red bounding boxes denote the errors.  PULSE converts almost
		all cats to dogs, and almost never does the reverse. Posterior
		Sampling makes roughly the same number of errors on cats and
		dogs.}
		\label{fig:cat20dog80-examples}
\end{figure*}

\paragraph{Our Contributions: Algorithms.}
We show that CPR (and hence PR) are achievable with a
simple-to-describe algorithm: posterior sampling, where we output
$\wh{x} \sim p(x^* \mid y)$.  This can be approximated well in
practice using Langevin dynamics for state-of-the-art generative
models representing $p(x^*)$, as we discuss in
Section~\ref{sec:langevin}.

Posterior Sampling also satisfies one more fairness condition: the
confusion matrix is symmetric, meaning that (for example) an equal
number of Black users will be reconstructed as White as White users
will be reconstructed as Black.  We call this condition
\emph{Symmetric Pairwise Error} (SPE).  
\begin{align}
	\Pr\left( \wh{x} \in c_i , x^* \in c_j \right) = \Pr\left( \wh{x}
	\in c_j, x^* \in c_i \right). \tag{SPE}
\end{align}
for all $i,j \in [k]$.  CPR implies SPE, and SPE implies PR (see
Figure~\ref{fig:relationships_properties}).

Since SPE implies PR, in general SPE is incompatible with RDP (per
Proposition~\ref{prop:rdp_pr_incompat}).  But in the special case of
two groups $c_1, c_2$ of equal size, then SPE actually implies RDP.
This gives an algorithm to achieve RDP for the two-group setting: we
reweight our input distribution such that each group has equal
probability, then perform Posterior Sampling with respect to the
reweighted distribution.  With more than two groups, there still
exists a reweighting of the groups such that Posterior Sampling on the
reweighted distribution satisfies RDP (see
Theorem~\ref{thm:cond_resamp_reweight}).  This reweighted-resampling
algorithm can be performed in practice by learning a GAN for the
reweighted distribution and using Langevin dynamics on the reweighted
GAN.

\paragraph{Our Contributions: Obliviousness.}
Posterior Sampling satisfies the CPR, PR, and SPE fairness criteria
while retaining an invaluable property: the algorithm doesn't depend
on the set of protected groups $C$.  It satisfies the fairness
properties for every set of protected groups,  which is an
algorithmically achievable way of addressing the issues raised
in~\cite{hanna2020towards} about race being ambiguous and ill-defined.
We say such an algorithm is \emph{obliviously} fair.  By contrast, our
reweighted-resampling algorithm achieving RDP needs to know the
protected groups, and would not satisfy RDP for a different collection
of groups.  Which fairness properties can be achieved obliviously, and
under what circumstances?

Our main results here are twofold: first, Posterior Sampling is the
\emph{only} algorithm that satisfies CPR obliviously.  Second, RDP
\emph{cannot} be satisfied obliviously.  This impossibility applies
even to obliviousness with respect to one of two plausible, socially
meaningful partitions.  Theorem~\ref{thm:white-asian} shows, for
example, that you cannot satisfy RDP with respect to both \{White,
Asian\} and \{White, South Asian, East Asian\} if your observations
are lossy.  This means that every algorithm can reasonably be viewed
as unfair with respect to RDP.

\paragraph{Our Contributions: Experiments.} We implement Posterior
Sampling via Langevin dynamics, study its empirical performance and
compare it to PULSE with respect to our defined metrics. We do this on
the MNIST~\cite{lecun1998mnist}, FlickrFaces-HQ~\cite{karras2019style}
and AFHQ cat \& dog~\cite{choi2020starganv2} datasets.  We evaluate
obliviousness and SPE of Posterior Sampling on the first two datasets.
Using the AFHQ cat \& dog dataset, we demonstrate empirically that
Posterior Sampling satisfies SPE and PR over various imbalances
between cats and dogs.

\subsection{Related Work} \label{sec:relatedwork}

Numerous works have attempted to tackle the issue of bias in the
machine learning of images, either for data generation/reconstruction
tasks or for downstream tasks such as face recognition and image
quality assessment. One popular approach for dealing with bias
consists of adversarially generating data or embeddings with a
discriminator for different values of the sensitive attributes,
yielding similar distributions for different values of the sensitive
attribute
\cite{madras2018learning,xu2018fairgan,xu2019fairgan+,gong2020jointly,khajehnej2020adadversarial,sattigeri2018fairness,yu2020inclusive}.
Another approach focuses on learning explicitly the bias of the
dataset, so as to remove it
\cite{khosla2012undoing,grover2019fair,choi2020fair}. The special case
of fair dimensionality reduction through principal component analysis
is solved by \cite{samadi2018price}. Another research direction
formulates the fairness constraints as an additional term in the loss
\cite{serna2020sensitiveloss}. Another approach focuses on minorities
and learns their specific features
\cite{amini2019uncovering,gong2020mitigating}.  A related line of work
improves the fairness of generative models without
retraining~\cite{tan2020improving}, however we do not know how to use
these for inverse problems.

Another relevant line of research  studies fairness in the presence of
uncertainty, either in the labels
\cite{kleinberg2018selection,blum2019recovering,wang2020fair,rolf2020balancing}
or in the sensitive attributes
\cite{awasthi2020equalized,lamy2019noise,celis2020fair,wang2020robust}.
In particular, one work studies overlapping groups
\cite{yang2020fairness}.

Super resolution using deep learning has had remarkable success at
producing accurate images. In \cite{ledig2017photo}, the authors
provide an algorithm which performs photo-realistic super resolution
using GANs. However, this model requires retraining of the GAN when
the measurement operator changes.  Subsequent work has overcome this
hurdle. Some models independent of the forward operator include
CSGM~\cite{bora2017compressed}, OneNet~\cite{rick2017one},
PULSE~\cite{menon2020pulse}, Deep Image Prior~\cite{ulyanov2018deep}
and Deep Decoder~\cite{heckel2019deep}. Another line of work has shown
that Posterior Sampling using approximate deep generative priors is
instance-optimal for compressed
sensing~\cite{jalal2021instance}.

\section{Fairness definitions for image generation}

\subsection{Representation Demographic Parity} While multiple group
fairness definitions (demographic parity, equalized odds or
opportunity, calibration
etc.\cite{barocas2017fairness,hardt2016equality}) have been studied
and widely accepted in the context of classification, their extension
to the setting of image generation is not immediate. Here, we extend
demographic parity.

\begin{definition}\label{def:RDP}
Let $x^* \in \R^n$ denote the ground truth, and $P$ denote its
distribution. Let $y \in \R^m$ be some measurements of $x^*$. For a
collection $C = \{c_1, \dots, c_k\}$ of (potentially overlapping)
sets, an algorithm which reconstructs $x^*$ using $y$ satisfies
Representation Demographic Parity (RDP) if: $$\forall i, j \in [k],
\,\, \Pr(\hat{x} \in c_i | x^* \in c_i) = \Pr(\hat{x} \in c_j | x^*
\in c_j). $$
\end{definition}

\begin{example}
If $k=2$, $c_1$ being all women, $c_2$ being all non-women,
Representation Demographic Parity with respect to these two groups
implies that women are as likely to be reconstructed as women as
non-women are to be reconstructed as non-women.
\end{example}

\subsection{Limitations of traditional group fairness
definitions}\label{sec:criticalrace} Inspired by
\cite{hanna2020towards}, we note several reasons for having fairness
definitions that are more flexible with respect to the groups in the
collection or partition.

\textbf{Minorities are ill-defined:} What constitutes a minority? Are
South Asians their own subgroup, or are they assigned as Asians? The
list of accepted minorities is not only inconsistent across location
and purpose, but multiple levels of granularity could be equally
valid.  Similar concerns can be raised from the point of view of
intersectionality: we might both be interested in the discrimination
faced by all women, and all people of color, without wanting to erase
the singular discrimination faced by women of color
\cite{buolamwini2018}.

\textbf{Races are multi-dimensional:} As \cite{roth2016multiple}
argues, races are multi-dimensional, and these dimensions are all
relevant, albeit in different settings. For instance, voting patterns
are more accurately predicted based on self-identified race, while
observed race is more informative when dealing with discrimination.
These differences are not minor: as \cite{howell2017} shows, measuring
races in five different ways led to widely different interpretations
of the same data.

\textbf{Partitions reify the status quo:} According to
\cite{hanna2020towards}, widespread adoption of race categories
participates in erasing their historical and social context
\cite{duster2005race,smart2008standardization}, as well as
perpetuating the current system and creating new harm
\cite{kaufman1999inconsistencies,sewell2016racism}.

\textbf{Who chooses the partition:}  \cite{barabas2020studying} raises
concerns on who has the power to choose the partitions and what their
intentions were. Historically, such partitions have done significant
harm to the minorities they were supposed to protect
\cite{mills2014racial,hanna2020towards}.

In response to these critiques, we  study a novel property of fairness
definitions.

\begin{definition}[obliviously]
 We say an algorithm satisfies a group fairness definition
 \emph{obliviously} if the algorithm satisfies the fairness definition
 for any collection of sets and does not require knowledge of the
 collection of sets to perform reconstruction. 
\end{definition}

Satisfying a fairness definition obliviously  is one way of addressing
the issues above, as it is now satisfied for all groups at the same
time.  This requirement may nevertheless be too strong, since most
such groupings are not socially meaningful. This leads to more
restricted versions of obliviousness, ones that only hold for specific
sets of collections.  Unfortunately, RDP cannot be satisfied even with
only two socially meaningful partitions.

\define{thm:white-asian}{Theorem}{
 Let $A$ and $B$ be
  disjoint groups (e.g., Asian and White people), and let
  $A_1, A_2 \subset A$ be disjoint groups that cannot be perfectly
  distinguished from measurements only (e.g., South Asians and East
  Asians).  Then Representation Demographic Parity cannot be satisfied
  $\{\{A, B\}, \, \{A_1, A_2, B\}\}$-obliviously.
}
\state{thm:white-asian}

In the example stated in Theorem~\ref{thm:white-asian}, it is
impossible to be fair as defined by Representation Demographic Parity
with respect to White people, South Asians, East Asians, and
Asians as a whole. This holds even if we know exactly what the
measurement process is, the demographics, and what the relevant groups
are.  

We can state this more generally:
\define[Representation Demographic Parity cannot be satisfied
obliviously]{thm:rdp-general}{Theorem}
{
  The only way for an algorithm to satisfy Representation Demographic
  Parity obliviously is to achieve perfect reconstruction.
}

\state{thm:rdp-general}

\subsection{Conditional Proportional Representation}

An alternative fairness measure is that the distribution of the output
of the algorithm should match the demographics of the input to the
algorithm:
\begin{definition}[Proportional Representation]
	In the setting of Definition~\ref{def:RDP}, an algorithm
	satisfies Proportional Representation (PR) if:  
	\begin{align*}
		\forall i \in \left[ k \right], \; \Pr\left( \wh{x} \in c_i \right) = \Pr\left( x^* \in c_i \right).
	\end{align*}
\end{definition}

One could also demand a much stricter fairness property, where the
algorithm should satisfy PR among the population that maps to the same
observation, for every possible observation:

\begin{definition}[Conditional Proportional Representation]
	In the setting of Definition~\ref{def:RDP}, an algorithm satisfies
	Conditional Proportional Representation (CPR) if, almost surely over
	$y$:
	\begin{align*} \forall i \in \left[ k \right], \; 
		\Pr\left( \wh{x} \in c_i | y \right) = \Pr\left( x^* \in c_i | y
		\right).
	\end{align*}
\end{definition}

Intuitively, many images could yield the same lossy measurement.
Because we have no way of knowing exactly from which image the
measurement came, we reconstruct one at random based on how likely
images in the same group are to have yielded this measurement in the
first place. As such, it is ``fair": every image that could have led
to the measurement gets a chance at being represented, not just the
most likely. This also implies that the reconstruction cannot be
deterministic.  Unfortunately, while CPR can be achieved via Posterior
Sampling (Theorem~\ref{thm:cr_cpr}), the fact that the definition
involves the posterior distribution makes it difficult to verify
without full knowledge of the measurement process and the probability
distribution.

It turns out that one cannot achieve RDP and PR simultaneously if you
have a majority which has mass larger than $1/2$. 
\define{prop:rdp_pr_incompat}{Proposition}{
Whenever there exists a majority class that the measurements cannot
100\% distinguish from the non-majority classes, PR and RDP are not
simultaneously achievable.
}
\state{prop:rdp_pr_incompat}

\section{Posterior Sampling }
The \emph{Posterior Sampling} algorithm outputs a reconstruction
$\wh{x}$ drawn from the posterior $P(\cdot \mid y)$. It is known to be
instance-optimal for compressed sensing~\cite{jalal2021instance} and
to give fairly accurate results in practice when implemented via
annealed Langevin dynamics~\cite{stefano-langevin}.  In this section,
we show that it also has good fairness properties.

It is easy to see that if one has access to the distribution $P$ over
images and the likelihood function associated with the measurement
process, then Posterior Sampling will satisfy the CPR.  The following
Theorem shows that this is the \emph{only} algorithm that can satisfy
CPR.

\begin{restatable}{theorem}{posteriorsamplingonly}\label{thm:cr_cpr}
	Posterior Sampling is the only algorithm that achieves oblivious
	Conditional Proportional Representation.
\end{restatable}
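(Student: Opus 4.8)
The plan is to reduce oblivious CPR to the statement that the algorithm's conditional output law coincides with the Bayesian posterior $P(\cdot\mid y)$ for almost every $y$, which is exactly the definition of Posterior Sampling. Model an algorithm as a Markov kernel $q(\cdot\mid y)$ from the measurement space into $\R^n$; this is forced by the problem setup, since the reconstruction is produced from $y$ alone together with fresh internal randomness, so $\wh x$ is conditionally independent of $x^*$ given $y$ and hence $\Pr(\wh x\in c\mid y)=q(c\mid y)$ for every Borel set $c$, while $\Pr(x^*\in c\mid y)=P(c\mid y)$. Crucially, obliviousness means this \emph{same} kernel $q$ must satisfy CPR for \emph{every} collection $C$ of subsets of $\R^n$. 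Applying the CPR identity to the two-element collection $\{c,\, \R^n\setminus c\}$ (or just the singleton $\{c\}$) then yields, for each fixed Borel set $c$, that $q(c\mid y)=P(c\mid y)$ for all $y$ outside a set $N_c$ of measurement-values that has probability zero.

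Next I would fix once and for all a \emph{countable} $\pi$-system $\mathcal{G}$ generating the Borel $\sigma$-algebra of $\R^n$, for instance the collection of boxes $\prod_{j=1}^{n}(-\infty, t_j]$ with each $t_j\in\mathbb{Q}$. Oblivious CPR applied to each member of $\mathcal{G}$ produces the null sets $N_c$ above; since $\mathcal{G}$ is countable, $N:=\bigcup_{c\in\mathcal{G}} N_c$ is still null. For every $y\notin N$ the two probability measures $q(\cdot\mid y)$ and $P(\cdot\mid y)$ agree on all of $\mathcal{G}$, so by the $\pi$--$\lambda$ (Dynkin) theorem they agree on every Borel set, i.e.\ $q(\cdot\mid y)=P(\cdot\mid y)$ for almost every $y$. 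That is precisely the assertion that the algorithm is a version of Posterior Sampling. The converse direction is the easy one already noted before the theorem: if $\wh x\sim P(\cdot\mid y)$ then $\Pr(\wh x\in c_i\mid y)=P(c_i\mid y)=\Pr(x^*\in c_i\mid y)$ term by term for any collection $C$, so Posterior Sampling achieves CPR obliviously; combining the two directions gives the ``only'' statement (necessarily read up to almost-everywhere equivalence in $y$).

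The only real subtlety — and the step I expect to be the crux of a careful write-up — is the interplay between the ``almost surely over $y$'' quantifier in the definition of CPR and the fact that uncountably many test sets $c$ are needed to pin down a probability measure on $\R^n$: one cannot simply assert ``CPR for all $c$'' and then fix a single generic $y$, because each $c$ may come with its own exceptional null set. The countable-generating-$\pi$-system device above is exactly what confines all the bad behavior to one null set and makes the pointwise-in-$y$ conclusion legitimate. A minor secondary point to make explicit is the modeling convention that an ``algorithm'' is a Markov kernel, which is what licenses both that $q(\cdot\mid y)$ is an honest probability measure for a.e.\ $y$ and the identity $\Pr(\wh x\in c\mid y)=q(c\mid y)$.
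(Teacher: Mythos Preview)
Your approach is essentially the same as the paper's: both argue that oblivious CPR forces $Q(\cdot\mid y)=P(\cdot\mid y)$ almost surely, and hence that the algorithm must be Posterior Sampling. The paper's write-up simply observes that agreement on all measurable sets means zero total variation distance, and declares the result.

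The one genuine difference is that you explicitly handle the quantifier exchange that the paper glosses over. The CPR definition gives, for each test set $c$, an equality holding for almost every $y$ with a $c$-dependent exceptional null set; the paper silently passes from ``for all $U$, for almost all $y$'' to ``for almost all $y$, $TV=0$,'' i.e.\ ``for almost all $y$, for all $U$,'' without justification. Your countable $\pi$-system plus Dynkin argument is exactly the device needed to make that step rigorous, so your version is in fact more careful at the point you correctly identified as the crux. The paper's proof and yours are otherwise the same in spirit and structure.
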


\begin{definition}[Symmetric Pairwise Error] In the setting of
	Definition~\ref{def:RDP}, an algorithm satisfies Symmetric Pairwise
	Error (SPE) if
	\begin{align*}
		\Pr\left( \wh{x} \in c_i , x^* \in c_j \right) = \Pr\left( \wh{x}
		\in c_j, x^* \in c_i \right), \; \forall i,j \in \left[ k \right].
	\end{align*}
\end{definition}

Using the fact that the ground truth and reconstruction are
conditionally independent given the measurements, we can show that any
algorithm that satisfies CPR will also satisfy SPE.

\define{thm: cpr implies spe}{Theorem}
{
	In the setting of Definition~\ref{def:RDP}, Conditional Proportional
	Representation implies Symmetric Pairwise Error.
}
\state{thm: cpr implies spe}

Theorem~\ref{thm:cr_cpr} and Theorem~\ref{thm: cpr implies spe} give
the following Corollary.

\define{cor:cond_resamp_achieves_pairwise_error}{Corollary}
{
	Posterior Sampling achieves symmetric pairwise error for any pair of
	sets $U, V \subset \mathbb{R}^n$. 
}
\state{cor:cond_resamp_achieves_pairwise_error}

Finally, for any partition $C$, there exists a reweighting of the
underlying distribution such that Posterior Sampling achieves RDP with
respect to the partition $C$. 

\begin{restatable}{theorem}{thmcondreweight}\label{thm:cond_resamp_reweight}
Let $C = \{c_1, \dots, c_k\}$ be a partition.  There exists a choice
of weights $\lambda_i > 0$ with $\sum \lambda_i = 1$ such that
Posterior Sampling with respect to the reweighted distribution
\[
p_{\lambda}(x) = \sum_i \lambda_i p(x \mid x \in c_i)
\]
satisfies RDP with respect to $C$.
\end{restatable}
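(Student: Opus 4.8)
The plan is to recast the requirement as a fixed-point equation on the probability simplex $\Delta = \{\lambda\in\R^k : \lambda_i\ge 0,\ \sum_i\lambda_i = 1\}$ and solve it with Brouwer's theorem.

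\emph{Step 1: reduce RDP to a scalar identity.} Run Posterior Sampling with respect to $p_\lambda$ (the measurement channel $p(y\mid x)$ is physical, hence unchanged), and write $f_i(\lambda) := \Pr(\wh{x}\in c_i\mid x^*\in c_i)$. Since $C$ is a partition, $p_\lambda(\cdot\mid x\in c_i) = p(\cdot\mid x\in c_i)$, so conditioning on $x^*\in c_i$ gives the same law of $x^*$ whether users come from $p$ or from $p_\lambda$; thus $f_i(\lambda)$ is the true RDP quantity and we may analyze the coupled model with $x^*\sim p_\lambda$. There $\wh{x}$ is a function of $y$ alone, so $\wh{x}$ and $x^*$ are conditionally independent given $y$, and $\wh{x}\sim p_\lambda(\cdot\mid y)$. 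With $g_j(y):=p(y\mid x^*\in c_j)$ (independent of $\lambda$, and $\int g_j(y)\,dy=1$) and $\pi_i^\lambda(y):=\lambda_i g_i(y)/\sum_j\lambda_j g_j(y)$,
\[
\Pr(\wh{x}\in c_i,\ x^*\in c_i) = \E_{y\sim p_\lambda}\big[\pi_i^\lambda(y)^2\big] = \lambda_i^2\,h_i(\lambda), \qquad h_i(\lambda) := \int\frac{g_i(y)^2}{\sum_j\lambda_j g_j(y)}\,dy,
\]
so $f_i(\lambda) = \lambda_i\,h_i(\lambda)$. Cauchy--Schwarz gives $h_i(\lambda)\ge 1$ (using $\int g_i(y)\,dy=1$), and for $\lambda_i>0$ the bound $\sum_j\lambda_j g_j\ge\lambda_i g_i$ gives $h_i(\lambda)\le 1/\lambda_i$. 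RDP with respect to $C$ is exactly $\lambda_1h_1(\lambda)=\dots=\lambda_k h_k(\lambda)$. (Alternatively one invokes Corollary~\ref{cor:cond_resamp_achieves_pairwise_error}: SPE and PR make $q_{ij}:=\Pr(\wh{x}\in c_i,x^*\in c_j)$ a symmetric matrix with row sums $\lambda_i$, and RDP asks for $q_{ii}/\lambda_i$ to be constant.)

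\emph{Step 2: the fixed-point map.} Define $T:\Delta\to\Delta$ by $T(\lambda)_i = (1/h_i(\lambda))\big/\sum_j(1/h_j(\lambda))$. It is well defined: when $\lambda_j>0$ we showed $1/h_j(\lambda)\ge\lambda_j$, and $1/h_j(\lambda)\ge 0$ always, so $\sum_j 1/h_j(\lambda)\ge\sum_j\lambda_j = 1>0$. A fixed point $\lambda=T(\lambda)$ satisfies $\lambda_i\propto 1/h_i(\lambda)$, i.e. $\lambda_i h_i(\lambda)$ is constant in $i$ on the support of $\lambda$; if every coordinate of the fixed point is positive this is precisely RDP, and those coordinates are the promised weights. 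So it suffices to show $T$ is continuous and its fixed points lie in the interior, then apply Brouwer on the compact convex set $\Delta$.

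\emph{Step 3: continuity and interiority -- the part needing care.} On the interior, $h_i$ is finite and continuous by dominated convergence, using $g_i^2/\sum_j\lambda_j g_j\le g_i/\min_j\lambda_j$. Under mild regularity -- say the per-group measurement laws $g_1,\dots,g_k$ have finite pairwise $\chi^2$-divergences, which rules out a group (or union of groups) being perfectly identifiable from $y$ -- one has for any $\lambda$ with support $S$ and any $j_0\in S$ that $h_i(\lambda)\le(1+\chi^2(g_i\|g_{j_0}))/\lambda_{j_0}<\infty$, and a uniform integrable bound of the same form near each boundary point gives continuity of $h_i$ on all of $\Delta$; hence $T$ is continuous on $\Delta$. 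Moreover, since $h_i(\lambda)<\infty$ for every $i$ and $\lambda$, we get $T(\lambda)_i>0$ for all $i$, so the image of $T$ -- and therefore any fixed point -- lies in the interior of $\Delta$. Brouwer now produces an interior fixed point $\lambda$, i.e. a reweighting with all $\lambda_i>0$ satisfying $\lambda_1h_1(\lambda)=\dots=\lambda_kh_k(\lambda)$, which is RDP with respect to $C$. The genuinely delicate scenario is when one insists on allowing perfectly-distinguishable groups: then $h_i$ can diverge on a face that assigns such a group zero weight, $T$ need not extend continuously, and one must instead run Brouwer on the regularized maps $T^{\epsilon}(\lambda)_i\propto 1/h_i((1-\epsilon)\lambda+\epsilon\mathbf{1}/k)$ and pass to a subsequential limit as $\epsilon\to 0$; the obstacle there is to argue the limiting weights remain strictly positive (equivalently, that an easy-to-reconstruct group is not driven to weight $0$).
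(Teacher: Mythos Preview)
Your proof is correct under the stated regularity hypothesis (finite pairwise $\chi^2$-divergences among the per-class measurement densities $g_i$), and it takes a genuinely different route from the paper's.

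The paper argues by compactness and contradiction: it restricts to the set $T=\{\lambda\in\Delta:\alpha_i(\lambda)\ge\epsilon\ \forall i\}$ (with $\epsilon$ the smallest $\alpha_i$ at uniform weights), minimizes the ratio $f(\lambda)=\max_i\alpha_i/\min_i\alpha_i$ over $T$, and shows that if the minimum $r$ exceeds $1$ then an explicit multiplicative perturbation---boosting the weights of the classes with $\alpha_i\le\sqrt{r}\min_j\alpha_j$ by a factor $r^{1/4}$---strictly lowers $f$, a contradiction. You instead write $\alpha_i(\lambda)=\lambda_i h_i(\lambda)$, define the self-map $T(\lambda)_i\propto 1/h_i(\lambda)$ on $\Delta$ whose interior fixed points are exactly the RDP reweightings, and invoke Brouwer.

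What each buys: the paper's perturbation is more elementary (no fixed-point theorem) and is essentially constructive, suggesting an iterative scheme for computing the weights. Your argument is shorter once the map is set up, and the clean identity $\alpha_i=\lambda_i h_i(\lambda)$ together with the two-sided estimate $1\le h_i(\lambda)\le 1/\lambda_i$ (Cauchy--Schwarz for the lower bound) is an illuminating structural observation not made explicit in the paper. Both proofs face the same boundary issue---$h_i$ can diverge (equivalently, the paper's $T$ can fail to be compact) when some $\lambda_j\to 0$ and a class is perfectly identifiable from $y$---and both need a nondegeneracy condition to close. You are commendably explicit about this, naming the $\chi^2$ hypothesis and flagging precisely where the limiting argument would stall without it; the paper's proof absorbs the same issue into the unproven assertion that $T$ is compact and $f$ continuous on it.
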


In the special case of 2 classes, the reweighting is very simple:
$\lambda_1 = \lambda_2 = \frac{1}{2}$.

\subsection{Representation Cross-Entropy} For the special case when
the collection $C$ is a partition, we can show that Posterior Sampling
obliviously minimizes a loss we call Representation Cross-Entropy
(RCE). Intuitively, one can think of this as the generative analogue
of the cross-entropy loss popular in classification settings.
Following the notation in Definition~\ref{def:RDP}, we define RCE as:
\begin{definition}[Representation Cross-Entropy]
	Let $C=\left\{ c_1, \cdots, c_k \right\}$ form a disjoint partition of
	$\R^n$, and let $U$ be a function such that $U(x)$ encodes where $x$
	lies in the partition.  The Representation Cross-Entropy (RCE) of
	a reconstruction algorithm $\cA$ with respect to $C$ is defined as
	\begin{align*}
		RCE(\cA) := -\E_{x^*, y}\log \Pr_{\wh{x}|y}\left[ \wh{x}
		\in U(x^*) \right].
	\end{align*}
	\label{def:rce}
\end{definition}

We show that if we want to minimize RCE over a partition, then we must
have CPR on this partition:
\define{thm:opt-rce}{Theorem}
{	Let $C=\left\{ c_1, \cdots, c_k  \right\}$ form a disjoint partition
of $\R^n$. An algorithm minimizes Representation Cross-Entropy on $C$
iff the algorithm satisfies CPR on $C$.  
}
\state{thm:opt-rce}
  
From Theorem~\ref{thm:cr_cpr}, we know that Posterior Sampling is the
only algorithm that can achieve CPR over all measurable sets. The same
result holds if we restrict to measurable partitions, so Posterior
Sampling is the only algorithm that minimizes RCE obliviously to the
partition.

\section{Experiments}
So far we have discussed and analyzed properties of several different
fairness metrics. In this section, we briefly describe how one can
implement Posterior Sampling,
and study the empirical performance
of Posterior Sampling and PULSE with respect to our defined metrics, on
the MNIST~\cite{lecun1998mnist}, FlickrFaces-HQ~\cite{karras2019style} and AFHQ cat\&dog
dataset~\cite{choi2020starganv2}. 

\subsection{Langevin Dynamics}\label{sec:langevin}
We implement Posterior Sampling via Langevin dynamics, which states that if $x_0 \sim \cN(0,cI_n),$ (for $c$ appropriately small),
then we can sample from $p(x|y)$ by running noisy gradient ascent:
\begin{align*}
    x_{t+1} \leftarrow x_t + \gamma_t \nabla_{x_t} \log p(x_t | y) + \sqrt{2\gamma_t}\xi_t,
\end{align*}
where $\xi_t \sim \cN(0,I_n)$ is an i.i.d. standard Gaussian drawn at each iteration.
It is well known~\cite{welling2011bayesian, stefano-langevin} that as $\gamma_t \to 0, t\to \infty,$ we have $p(x_t|y) \to p(x|y).$
In practice, we need some form of annealing of the noise in order to mix efficiently. 
Since our algorithm is randomized, we always output the first obtained reconstruction.
Please see Appendix~\ref{app:langevin} for architecture-specific details.

\subsection{MNIST dataset}
We trained a VAE~\cite{kingma2013auto} on MNIST digits, and consider
the groups \{0, 1, 2, 3, 4, $\geq 5$\}. As seen in the
confusion matrix in Table~\ref{tab:mnist}, Posterior Sampling does
satisfy SPE obliviously over the groups (recall that SPE also implies PR).
\begin{table}[t]
    \centering
    \resizebox{0.7\linewidth}{!}{
    \begin{tabular}{|c|c|c|c|c|c|c|}
    \hline
      & 0 & 1 & 2 & 3 & 4 & $\geq 5$ \\
      \hline
    0 & 33 & 2 & 0 & 2 & 0 & 6 \\
    1 & 0 & 61 & 1 & 0 & 1 & 4 \\
    2 & 0 & 2 & 45 & 1 & 1 & 6 \\
    3 & 1 & 0 & 2 & 33 & 2 & 7 \\
    4 & 1 & 0 & 1 & 0 & 41 & 12 \\
    $\geq 5$ & 2 & 5 & 10 & 5 & 14 & 199 \\
    \hline
    \end{tabular}
    }
    \caption{\small Confusion matrix for super-resolution of MNIST digits
      after downsampling by $4\times$ in each dimension. The rows
      denote the labels of original images, the columns denote the
      labels of reconstructed images. The symmetric nature of the
      matrix shows that Posterior Sampling achieves SPE
    obliviously over multiple groups.}
    \label{tab:mnist}
\end{table}

\subsection{FlickrFaces dataset}
\paragraph{Dataset and generative models.}
We use a StyleGAN2~\cite{karras2020analyzing} model for PULSE, while
Posterior Sampling uses the NCSNv2 generative
model~\cite{song2019generative,song2020improved}. We choose this model
as it has been designed to produce images via Langevin dynamics, which
is the practical implementation of Posterior Sampling. 

\paragraph{Results}
In Figure~\ref{fig:reconstr-ffhq}, we show the results of
super-resolution on Barack Obama and four faces from FFHQ, using PULSE
and Posterior Sampling. As shown, Posterior Sampling preserves
the image features better than PULSE. 
We use the CLIP classifier~\cite{radford2021learning} to assign labels
of \{child with / without glasses, adult with / without glasses\}, and
report the confusion matrix in Table~\ref{tab:ffhq}. This shows that
Posterior Sampling satisfies SPE over multiple groups obliviously.

Please see Appendix~\ref{app:ffhq} for more representative samples.
These correspond to images 69000-69020 in the FFHQ validation set (these were the first 20 images as we downloaded them in reverse-chronological order from the Google Drive folder).
\begin{table}
    \centering
    \begin{tabular}{|c|c|c|c|c|}
                \hline 
                 & A & B & C & D  \\
                 \hline
                 A & 5 & 3 & 2 & 0 \\
                 B & 1 & 99 & 0 & 10 \\
                 C & 1 & 0 & 68 & 10 \\
                 D & 1 & 10 & 8 & 282 \\
                 \hline
    \end{tabular}
    \caption{\small Confusion matrix for super-resolution of FFHQ
      faces after $32\times$ downsampling in each dimension. The
      categories are A: child with glasses, B: child without glasses,
      C: adult with glasses, D: adult without glasses. Rows denote
      labels of original images, columns denote labels of
      reconstructed images.  The symmetric nature of the matrix shows
      that Posterior Sampling achieves SPE over multiple groups
      obliviously.}
    \label{tab:ffhq}
\end{table}

\subsection{AFHQ Cats and Dogs dataset}
\paragraph{Dataset and models}
We trained StyleGAN2~\cite{karras2020training} on the AFHQ cat \&
dog~\cite{choi2020starganv2} training set. In order to study the
effect of population bias on PULSE and Posterior Sampling, we trained
three models on datasets with varying bias: (1) 20\% cats and 80\%
dogs, (2) 80\% cats and 20\% dogs, and (3) 50\% cats and 50\% dogs. 

In order to label the images generated by the GAN, we take a
pre-trained Resnet108 and retrain the last layer using labelled images
from the AFHQ training set. We find that the classifier's predictions
does match the human perception of dogs and cats in general.

\paragraph{Posterior Sampling satisfies SPE and PR when the cats and dogs are unbalanced.}
For this experiment, we draw $x^*$ from the AFHQ validation dataset,
which contains 500 images of cats and 500 images of dogs.  Since we
want to study whether Posterior Sampling and PULSE satisfy SPE and
PR, we construct the test set to match the training population of the
generator. That is, for the 20\% cat generator, we use 125 images of
cats and all 500 images of dogs from the AFHQ dataset. Similarly, for
the 80\% cat generator, we use 500 images of cats and 125 images of
dogs in the test set.

We then downscale the images, and vary the downscaling factor such
that the observed measurements have resolution $1\times 1, 2\times 2,
4\times 4, 8\times 8.$ We ran PULSE and Posterior Sampling to
super-resolve the blurry measurements, and used a classifier to count
how many cats and dogs were reconstructed in the wrong class. The
results for the 20\% cat generator are in Table~\ref{tab:cat20dog80} and Figure~\ref{fig:cat20dog80},
and the results for the 80\% cat generator are in
Table~\ref{tab:cat80dog20} and Figure~\ref{fig:cat80dog20}. In
Figure~\ref{fig:cat20dog80-examples} we show example reconstructions.

We find that PULSE consistently makes very few mistakes on the
majority, and an overwhelming number of mistakes on the minority.
Posterior Sampling, however, makes an approximately equal number
of mistakes on each class (i.e., satisfies SPE).  Equivalently for
this 2-class setting, it generates cats and dogs in proportion to
their population (i.e., satisfies PR).

\begin{figure*}[h]
	\begin{subtable}[t]{0.48\linewidth}
		\vspace{-13em}
		\centering
		\resizebox{0.8\columnwidth}{!}{%
		\begin{tabular}[scale=0.9]{|c|c|c|c|c|}
			\hline 
			\multirow{2}{*}{$m$} & \multicolumn{2}{|c|}{PULSE} &
			\multicolumn{2}{|c|}{Ours} \\
			\cline{2-5}
			& Cats & Dogs & Cats &
			Dogs \\
			\hline
			$1\times 1$  & 113 & 58 & 102 & 106 \\
			$2\times 2$  & 104 & 44 & 97 & 81 \\
			$4\times 4$  & 110 & 25 & 94 & 110 \\
			$8\times 8$ & 101 & 10 & 70 & 59 \\
			\hline
		\end{tabular}
		}
		\caption{\small Number of errors on 20\% cat generator, for each
		resolution. Sampled test set has \textbf{125} cats and
		\textbf{500} dogs from the AFHQ validation set to mimick the
		generator's training distribution.  PULSE makes errors on almost
		all the cats and a few dogs, while Posterior Sampling is
		relatively balanced.}
		\label{tab:cat20dog80}
	\end{subtable}
	\hfill
	\begin{subfigure}[t]{0.48\linewidth}
		\centering
		\includegraphics[scale=0.35]{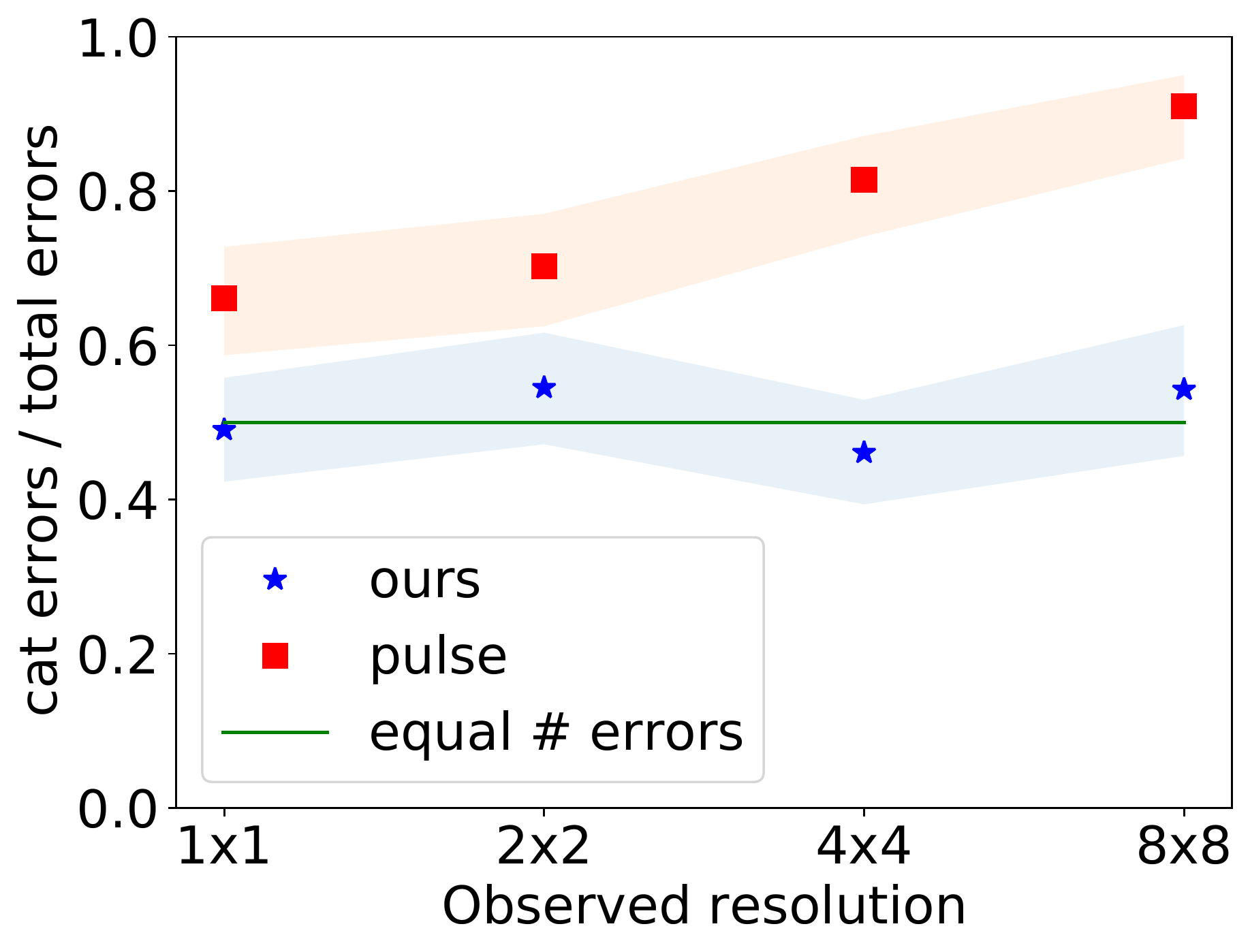}
		\caption{\small Fraction of all errors on cats for 20\% cat generator.}
    	\label{fig:cat20dog80}
	\end{subfigure}
	\bigskip
	\begin{subtable}[t]{0.48\linewidth}
		\vspace{-13em}
		\centering
		\resizebox{0.8\columnwidth}{!}{%
		\begin{tabular}[scale=0.9]{|c|c|c|c|c|}
			\hline 
			\multirow{2}{*}{$m$} & \multicolumn{2}{|c|}{PULSE} &
			\multicolumn{2}{|c|}{Ours} \\
			\cline{2-5}
			& Cats & Dogs & Cats &
			Dogs \\
			\hline
			$1\times 1$  & 0 & 125 & 115 & 96 \\
			$2\times 2$  & 0 & 125 & 82 & 97 \\
			$4\times 4$  & 0 & 125 & 94 & 86 \\
			$8\times 8$  & 1 & 123 & 71 & 94 \\
			\hline
		\end{tabular}
		}
		\caption{\small Number of errors on 80\% cat generator, for each
		resolution. Sampled test set has \textbf{500} cats and
		\textbf{125} dogs from the AFHQ validation set to mimick the
		generator's training distribution.  PULSE makes errors on almost
		all the dogs and no cats, while Posterior Sampling is
		relatively balanced.}
		\label{tab:cat80dog20}
	\end{subtable}
	\hfill
	\begin{subfigure}[t]{0.48\linewidth}
		\centering
		\includegraphics[scale=0.35]{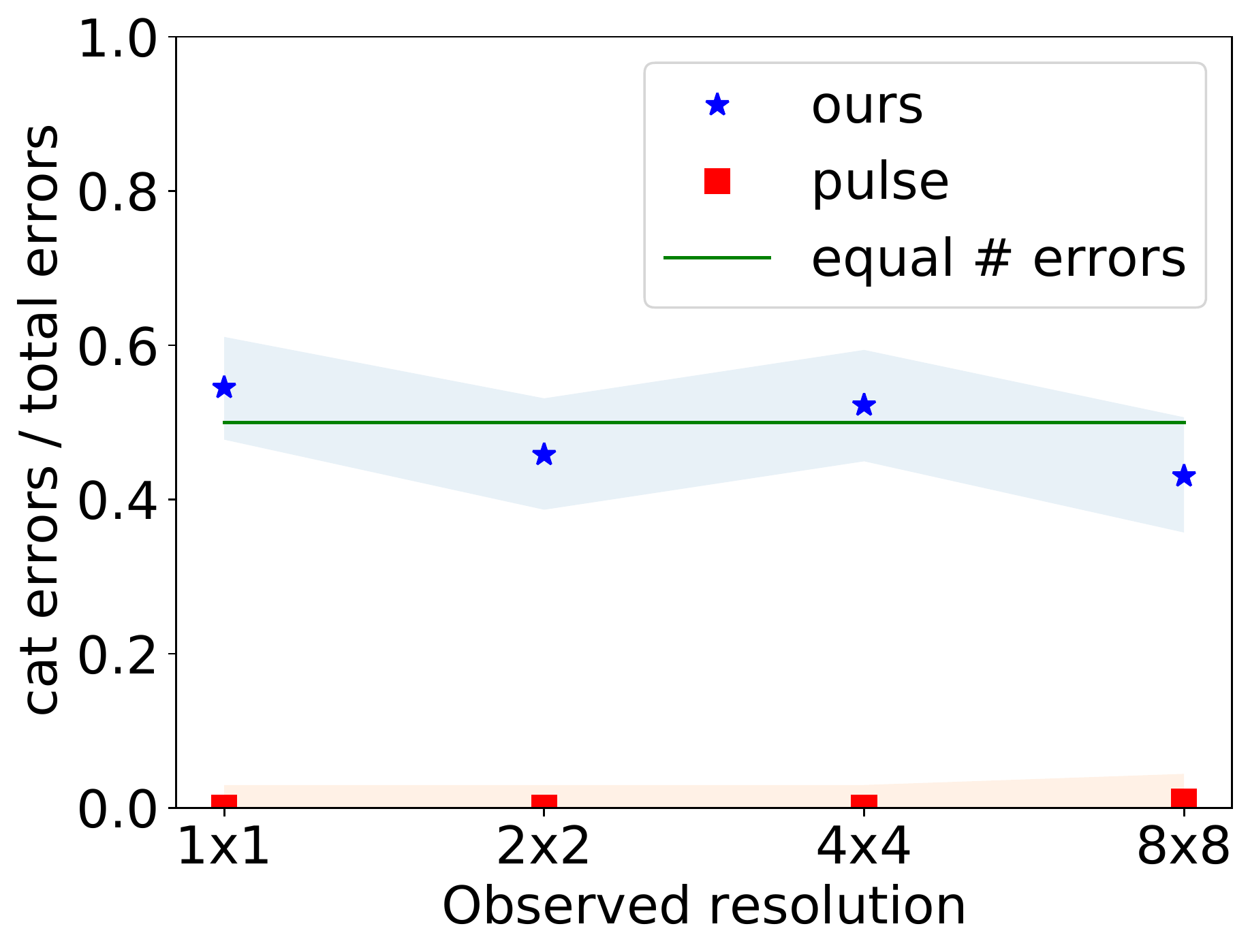}
		\caption{\small Fraction of all errors on cats for 80\% cat generator.}
    	\label{fig:cat80dog20}
	\end{subfigure}
	\caption{\small Figure (a): we use a StyleGAN2 model trained on 20\% cats and
	report errors when reconstructing images from
	low-resolution measurements. The test set consists of 125 cats and 500 dogs from
	the AFHQ validation set to mimick the generator's training
	distribution (note that these correspond to all dogs in the AFHQ
	validation set). Figure (b) shows the proportion of all errors that
	are on cats, along with 95\% confidence intervals from a binomial
	test. An algorithm that satisfies SPE would have this
	probability=0.5 (green line).  Figure (c), (d), show analogous results 
	when we use a StyleGAN2 generator trained on 80\% cats. PULSE is clearly 
	biased towards the majority, while Posterior Sampling via Langevin dynamics 
	appears to satisfy SPE and PR.
  We remark that at $1\times 1$ resolution, there is effectively no
  information and Posterior Sampling random guesses, while PULSE
  prefers the majority.}
	\label{fig:afhq-quant}
\end{figure*}

\paragraph{Posterior Sampling satisfies RDP, SPE, and PR when the cats
and dogs are balanced.} We use a generator trained on 50\% cats and
50\% dogs, and study whether Posterior Sampling and PULSE satisfy RDP,
SPE, and PR in practice. In this case, we use all images of cats and
dogs from the AFHQ validation set. These results are in
Appendix~\ref{app:afhq}, Figure~\ref{fig:cat50dog50}.  Please see
Appendix~\ref{app:afhq} for more results as we vary the training bias
of the generator and test SPE for images drawn from the range of the
generator.

\section{Limitations}
The fact that CPR can be satisfied obliviously is its main strength,
as the subgroups one would like to protect are often not well defined
or labeled in datasets. This is especially beneficial for overlooked
groups that lack the power to convince an algorithm designer to cater
to them.  However, obliviousness can also be seen as a weakness, as it
leads to symmetry in the \emph{number} of errors in each group rather
than the \emph{fraction} of errors.  For two groups, this means that
the minority group will always have higher error rate than the
majority.

Furthermore, the goal of CPR is to treat the members of each group
equally.  The philosophical stance behind this property implicitly
views being ``fair'' as treating individuals equally, and hence
representing groups in proportion to their size.  However, alternative
philosophical stances exist. In particular, it is at odds with the
idea that historically oppressed minorities should get particular
attention \cite{hanna2020towards}.  One could adapt such an approach
into our framework by reweighting the classes, analogous to
Theorem~\ref{thm:cond_resamp_reweight}, but doing so requires explicit
group information.

Finally, all of the definitions we consider focus on representation
but do not consider the quality of the reconstruction. If all
reconstructions on minorities were of poor quality (for instance
because the training set did not have enough images of this specific
minority, and/or they were of poorer quality, as we know can happen
\cite{buolamwini2018}), the algorithm could still satisfy any of the
definitions and be deemed ``fair'' according to it.  Representation
fairness is just one piece of the larger question of fairness in
reconstruction.

\section{Conclusion}
In the image generation setting, fairness is related to the concept of
\emph{representation}: we assign a protected group to the output,
which should match the protected group of the input. This is a stark
contrast with the classification setting, in which we usually require
some form of independence between the output and the sensitive
attributes. We therefore introduce two notions of fairness, an
extension of demographic parity called Representation Demographic
Parity (RDP), and a conceptually new notion, Conditional Proportional
Representation (CPR). We show that these notions are in general
incompatible. Furthermore, we prove that RDP is strongly dependent on
the choice of the protected groups. This is especially problematic for
generating images of people, as races are usually ill-defined and/or
ambiguous. CPR, however, does not suffer from these downsides, and can
even be satisfied obliviously (i.e., simultaneously for any choice of
protected groups).

We prove that Posterior Sampling can achieve CPR, and is actually the
only algorithm that can achieve CPR fully obliviously. We show how to
experimentally implement our findings through Langevin dynamics, and
our experiments exhibit the expected desirable properties.

We see our work as a first step towards better understanding ideas of
fairness in the context of generating structured data -- our paper
deals with image generation, but the problem of generating structured
data could be extended to other settings. What happens when the data
is of a different type? For instance, one might want to predict
pronouns in the context of text completion or generation, or provide
the option to use a certain dialect.

The definitions introduced in this paper are specific to generative
procedures. However, the underlying issues -- having definitions that
do not strongly rely on the choice of the protected groups -- can be
found in the classification setting as well. It would be interesting
to see if any analogs of CPR exist in this more traditional setting,
and if there exist algorithms that can achieve it obliviously.

\section{Acknowledgements} Ajil Jalal and Alex Dimakis were
supported by NSF Grants CCF 1934932, AF 1901292, 2008710, 2019844 the
NSF IFML 2019844 award and research gifts by Western Digital,
WNCG and MLL, computing resources from TACC and the Archie Straiton
Fellowship.  Sushrut Karmalkar was supported by a University Graduate
Fellowship from the University of Texas at Austin. Jessica Hoffmann
was supported by NSF TRIPODS grant 1934932. Eric Price was supported by
NSF Award CCF-1751040 (CAREER) and NSF IFML 2019844.

\bibliography{main}
\bibliographystyle{icml2021}

\clearpage
\appendix
\onecolumn
\section{FFHQ Experiments}\label{app:ffhq}
\begin{figure}[h]
\begin{center}
  \includegraphics[width=\linewidth]{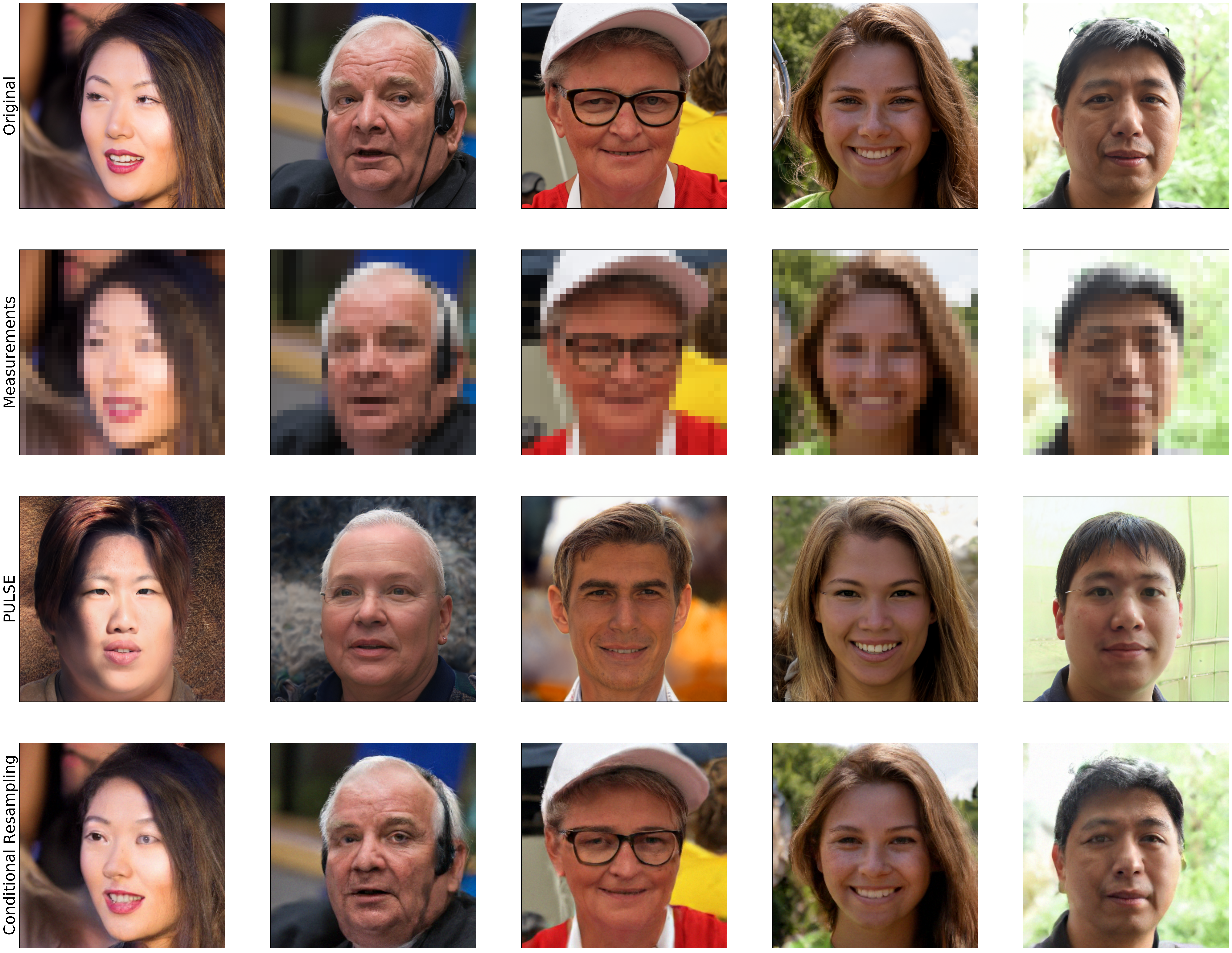}
\end{center}
\caption{Super-resolution reconstructions on faces 69000-69004 from
the FFHQ dataset. The top row shows original images, the second row
shows what the algorithms observe: blurry measurements after
downsampling by $32\times$ in each dimension. The third row shows
reconstructions by PULSE, and the last row shows reconstructions by
Posterior Sampling via Langevin dynamics, the algorithm we are
advocating for.
		}
\label{fig:ffhq-appendix-1}
\end{figure}

\begin{figure}[h]
\begin{center}
  \includegraphics[width=\linewidth]{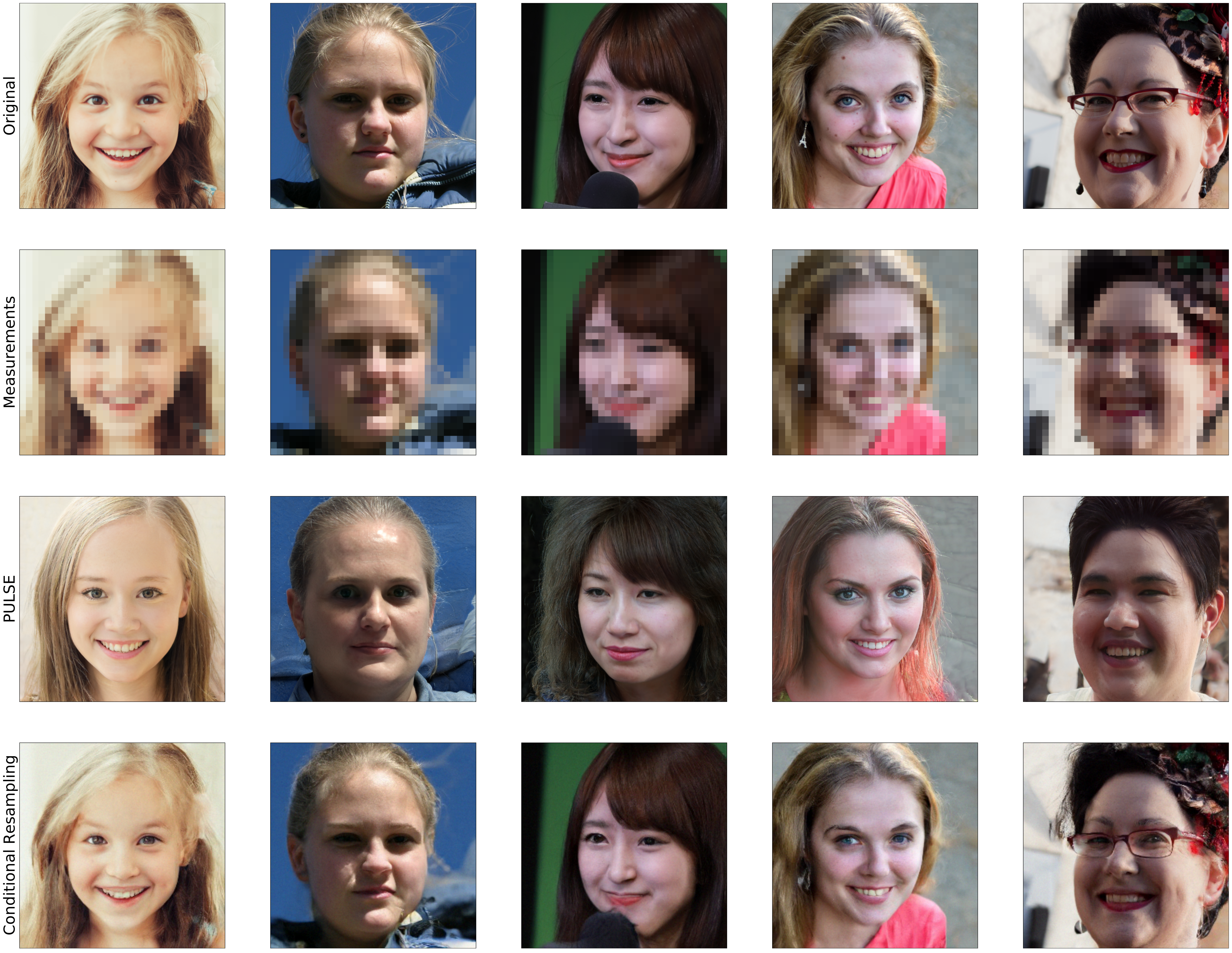}
\end{center}
\caption{Super-resolution reconstructions on faces 69005-69009 from
the FFHQ dataset. The top row shows original images, the second row
shows what the algorithms observe: blurry measurements after
downsampling by $32\times$ in each dimension. The third row shows
reconstructions by PULSE, and the last row shows reconstructions by
Posterior Sampling via Langevin dynamics, the algorithm we are
advocating for.
		}
\label{fig:ffhq-appendix-2}
\end{figure}

\begin{figure}
\begin{center}
  \includegraphics[width=\linewidth]{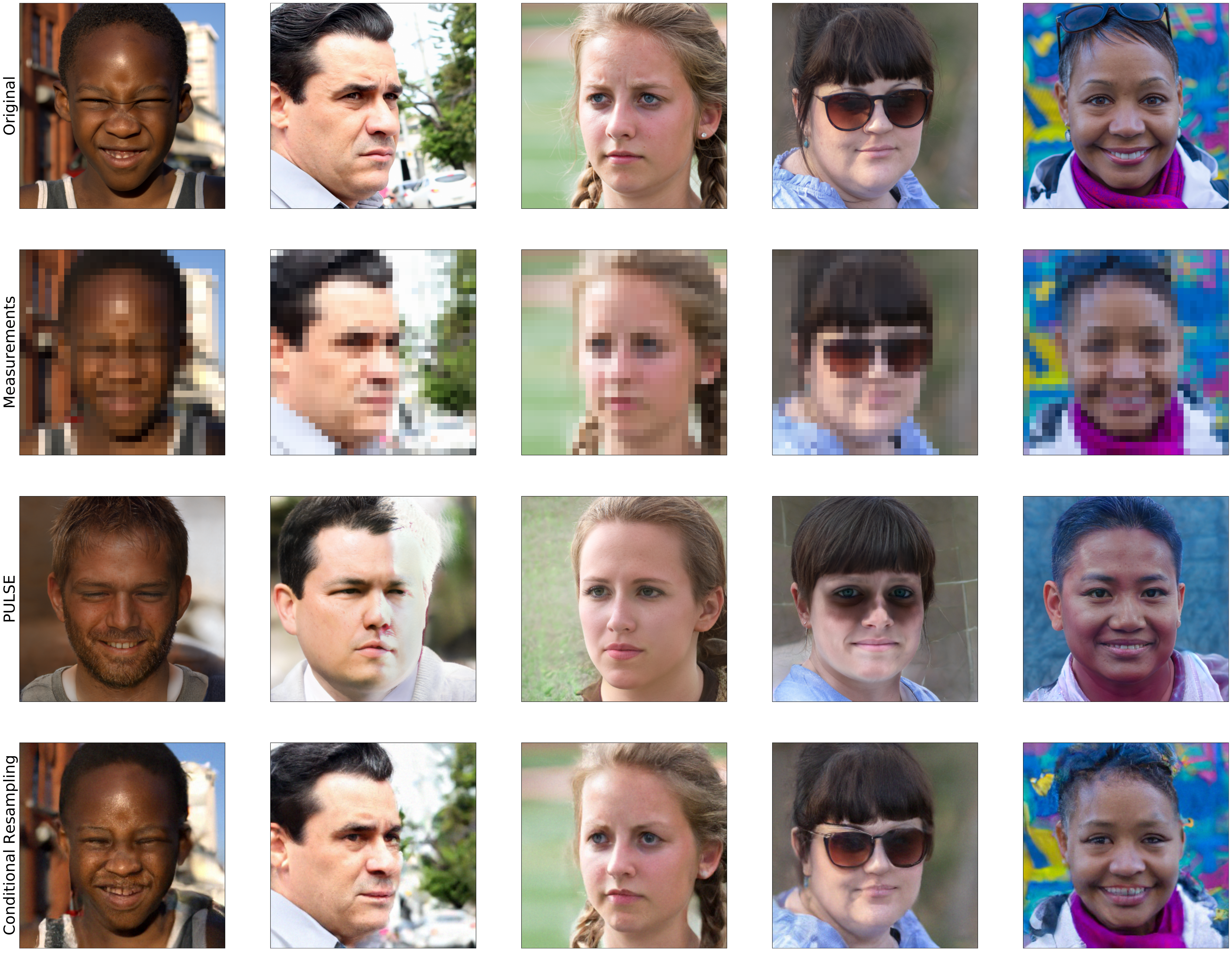}
\end{center}
\caption{Super-resolution reconstructions on faces 69010-69014 from
the FFHQ dataset. The top row shows original images, the second row
shows what the algorithms observe: blurry measurements after
downsampling by $32\times$ in each dimension. The third row shows
reconstructions by PULSE, and the last row shows reconstructions by
Posterior Sampling via Langevin dynamics, the algorithm we are
advocating for.
		}
\label{fig:ffhq-appendix-3}
\end{figure}

\begin{figure}
\begin{center}
  \includegraphics[width=\linewidth]{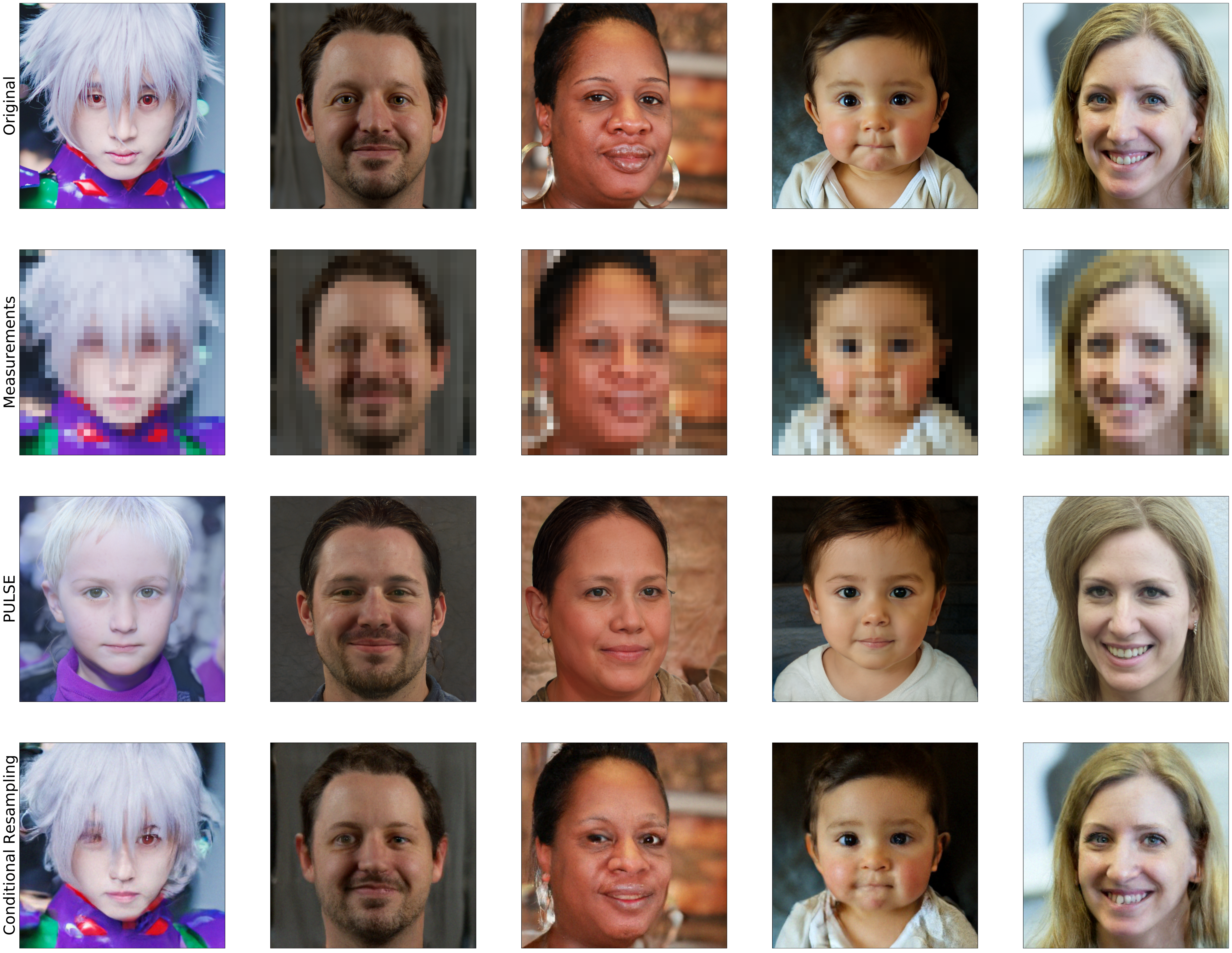}
\end{center}
\caption{Super-resolution reconstructions on faces 69015-69020 from
the FFHQ dataset. The top row shows original images, the second row
shows what the algorithms observe: blurry measurements after
downsampling by $32\times$ in each dimension. The third row shows
reconstructions by PULSE, and the last row shows reconstructions by
Posterior Sampling via Langevin dynamics, the algorithm we are
advocating for.
		}
\label{fig:ffhq-appendix-4}
\end{figure}

\clearpage
\section{AFHQ Experiments}\label{app:afhq}
\subsection{50\% cat generator}
For this experiment, we draw $x^*$ from the validation set of the AFHQ
dataset which contains 500 images of cats + 500 images of dogs.  We
use a generator trained on 50\% cats and 50\% dogs, and use it to
study whether posterior sampling and PULSE satisfy RDP, SPE, and PR in
practice. These results are in Figure~\ref{fig:cat50dog50}.
\begin{figure*}[t]
	\begin{subtable}[t]{0.48\linewidth}
		\vspace{-13em}
		\centering
		\resizebox{\columnwidth}{!}{%
		\begin{tabular}[scale=0.9]{|c|c|c|c|c|}
			\hline 
			\multirow{2}{*}{$m$} & \multicolumn{2}{|c|}{PULSE} &
			\multicolumn{2}{|c|}{Ours} \\
			\cline{2-5}
			& Cats & Dogs & Cats &
			Dogs \\
			\hline
			$1\times 1$ & 319 & 183 & 245 & 261 \\
			$2\times 2$ & 282 & 234 & 239 & 239 \\
			$4\times 4$ & 225 & 246 & 223 & 229 \\
			$8\times 8$ & 160 & 179 & 119 & 146 \\
			\hline
		\end{tabular}
		}
		\caption{Number of errors. Test set has \textbf{500} cats and
		\textbf{500} dogs}
	\end{subtable}
	\begin{subfigure}[t]{0.48\linewidth}
		\centering
		\includegraphics[scale=0.35]{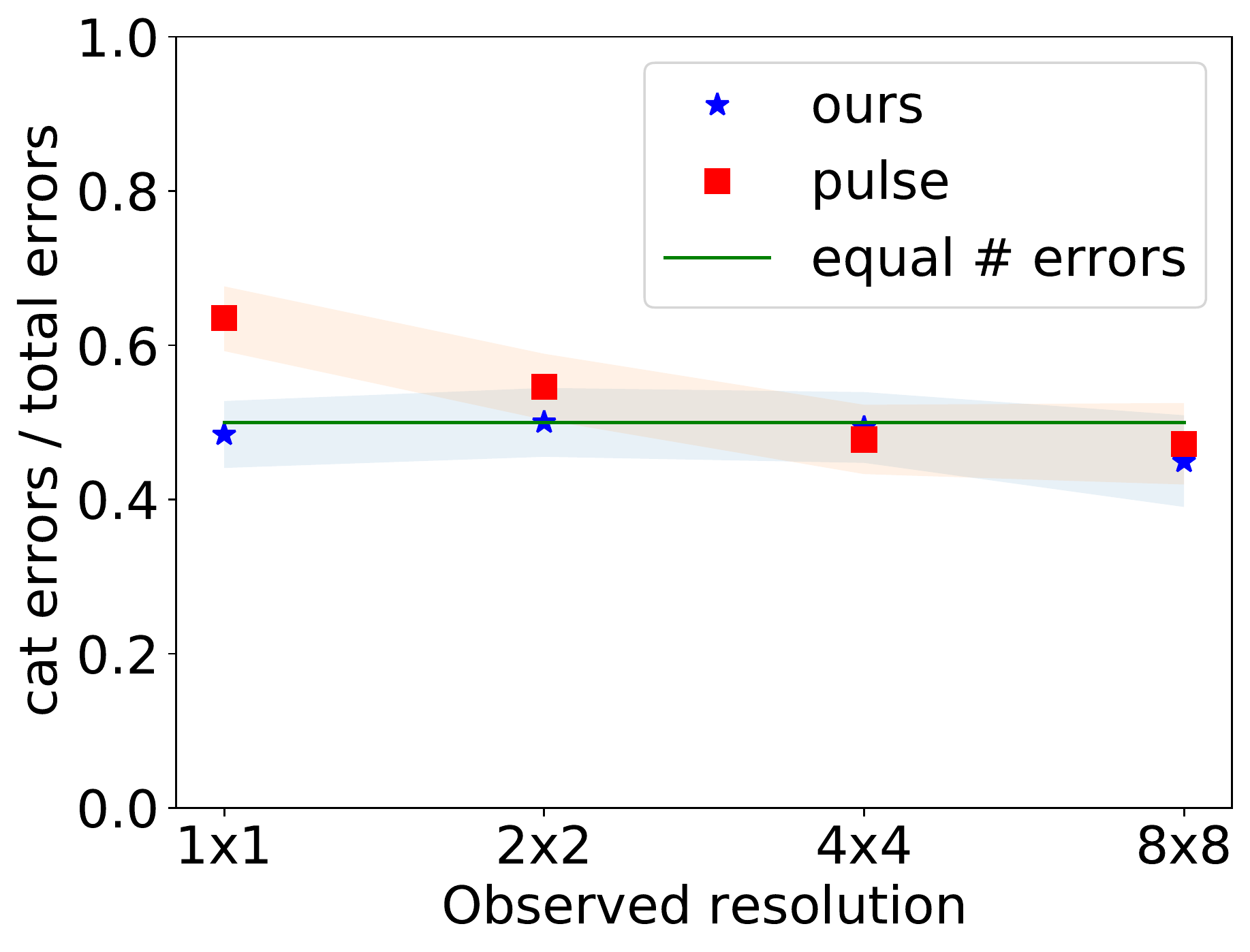}
		\caption{Fraction of all errors on cats for 50\% cat generator.}
	\end{subfigure}
	\caption{We use a StyleGAN2 model trained on 50\% cats and
	report errors when reconstructing images from
	low-resolution measurements. The test set consists of 500 cats and 500 dogs from
	the AFHQ validation set to mimick the generator's training
	distribution (note that these correspond to all cats and dogs in the
	AFHQ validation set).	
	Figure (b) shows the proportion of all errors that are on cats,
	along with 95\% confidence intervals from a binomial test.
	An algorithm
	that satisfies SPE would have this probability=0.5 (green line).
	In this case where the generator is balanced, Posterior Sampling
	via Langevin dynamics appears to achieve SPE, PR, and RDP. PULSE
	also appears to satisfy SPE, PR, and RDP, except when the resolution
	of measurements is $1\times 1.$}
	\label{fig:cat50dog50}
\end{figure*}

\subsection{$x^*$ drawn from generator}
\begin{figure*}[t]
	\begin{subtable}[t]{0.48\linewidth}
		\vspace{-13em}
		\centering
		\resizebox{\columnwidth}{!}{%
		\begin{tabular}[scale=0.9]{|c|c|c|c|c|}
			\hline 
			\multirow{2}{*}{$m$} & \multicolumn{2}{|c|}{PULSE} &
			\multicolumn{2}{|c|}{Ours} \\
			\cline{2-5}
			& Cats & Dogs & Cats &
			Dogs \\
			\hline
					$1\times 1$  & 56 & 17 & 44 & 30   \\
					$2\times 2$  & 48 & 10 & 45 & 24   \\
					$4\times 4$  & 54 & 8 & 33 & 23    \\
					$8\times 8$ & 48 & 4 & 11 & 14    \\
			\hline
		\end{tabular}
		}
		\caption{Number of errors on 20\% cat generator, for each
    resolution. Sampled test set has \textbf{60} cats and \textbf{140}
  dogs.  PULSE makes errors on almost all the cats and relatively few
dogs, while Posterior Sampling is relatively balanced.}
	\end{subtable}
	\begin{subfigure}[t]{0.48\linewidth}
		\centering
		\includegraphics[scale=0.35]{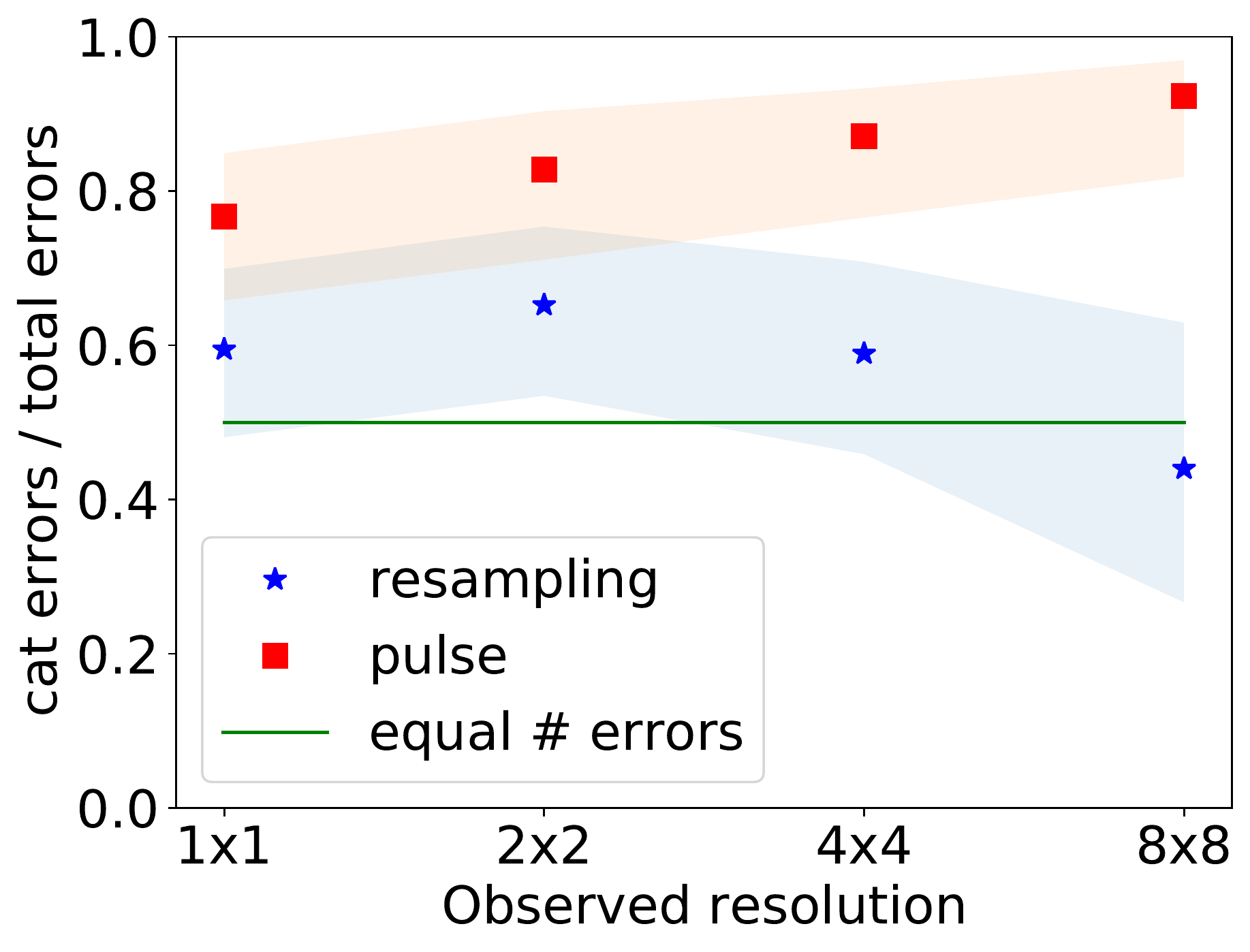}
		\caption{Binomial hypothesis test for Symmetric Pairwise Error
		(SPE)}
	\end{subfigure}
  \caption{We sample 200 images from a StyleGAN2 model trained on 20\%
    cats, and report errors when reconstructing them from
    low-resolution measurements.  Figure (b) shows the proportion of
    all errors that are on cats, along with 95\% confidence intervals
    from a binomial test. An algorithm that satisfies SPE would have
    this probability=0.5 (green line). PULSE is clearly biased towards
    the majority, while Posterior Sampling via Langevin dynamics
    appears to satisfy SPE (except when $m=2\times 2$, but one failure
    is unsurprising as we are performing sequential hypothesis tests.)}
	\label{fig:fakecat20dog80}
\end{figure*}

\begin{figure*}[t]
	\begin{subtable}[t]{0.48\linewidth}
		\vspace{-13em}
		\centering
		\resizebox{\columnwidth}{!}{%
		\begin{tabular}[scale=0.9]{|c|c|c|c|c|}
			\hline 
			\multirow{2}{*}{$m$} & \multicolumn{2}{|c|}{PULSE} &
			\multicolumn{2}{|c|}{Ours} \\
			\cline{2-5}
			& Cats & Dogs & Cats &
			Dogs \\
			\hline
				$1\times 1$& 0 & 47 & 37 & 37 \\
				$2\times 2$& 0 & 47 & 30 & 29  \\
				$4\times 4$& 0 & 47 & 21 & 23 \\
				$8\times 8$& 1 & 47 & 4  & 8  \\
			\hline
		\end{tabular}
    }
    \label{tab:fakecat80dog20}
    \caption{Number of errors on 80\% cat generator, for each
    resolution. Sampled test set has \textbf{153} cats and \textbf{47}
    dogs.  PULSE makes errors on almost all the cats and relatively few
    dogs, while posterior sampling is relatively balanced.}
	\end{subtable}
	\begin{subfigure}[t]{0.48\linewidth}
		\centering
		\includegraphics[scale=0.35]{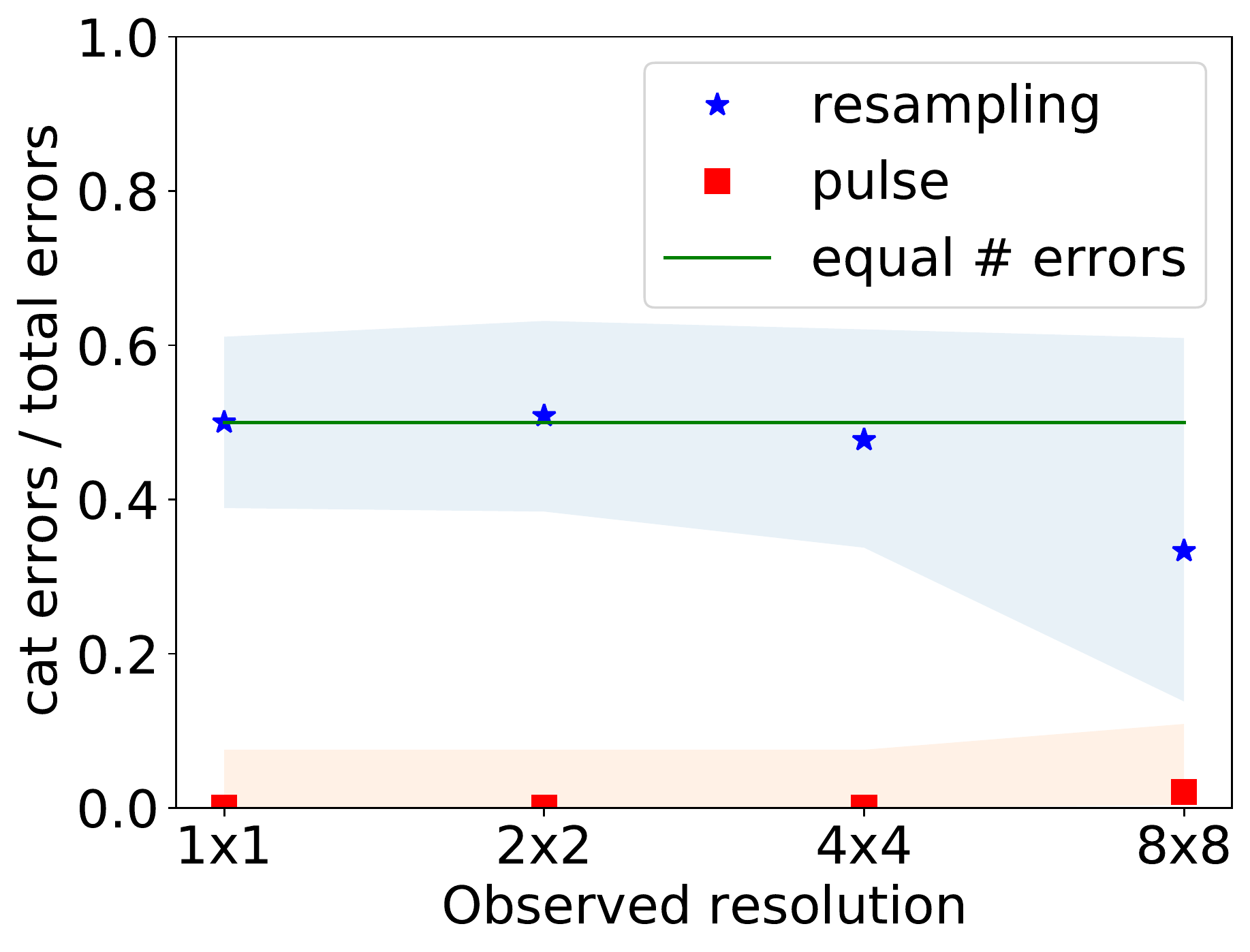}
		\caption{Binomial hypothesis test for Symmetric Pairwise Error
		(SPE)}
	\end{subfigure}
	\caption{We sample 200 images from a StyleGAN2 model trained on 80\%
	cats, and report errors when reconstructing them from low-resolution
	measurements.  Figure (b) shows the proportion of all errors that are on cats, along with 95\% confidence intervals from a binomial test. An algorithm that satisfies SPE would have this probability=0.5 (green line). PULSE is clearly biased towards the majority, while posterior sampling via Langevin dynamics appears to satisfy SPE.}
	\label{fig:fakecat80dog20}
\end{figure*}

In Figure~\ref{fig:fakecat20dog80}, we show results when 200
images drawn from the 20\% cat generator are reconstructed.

In Figure~\ref{fig:fakecat80dog20}, we show results when 200
images drawn from the 80\% cat generator are reconstructed.

\subsection{Varying training bias}
We train StyleGAN2 models with 10\%, 20\%, 30\%, 40\%, 50\%, 60\%,
70\%, 80\%, 90\% cats, and report the fraction of errors on cats when
tested on the AFHQ validation set. The results are in
Figure~\ref{fig:spe-biases}.

\begin{figure}
\begin{center}
  \includegraphics[width=0.8\columnwidth]{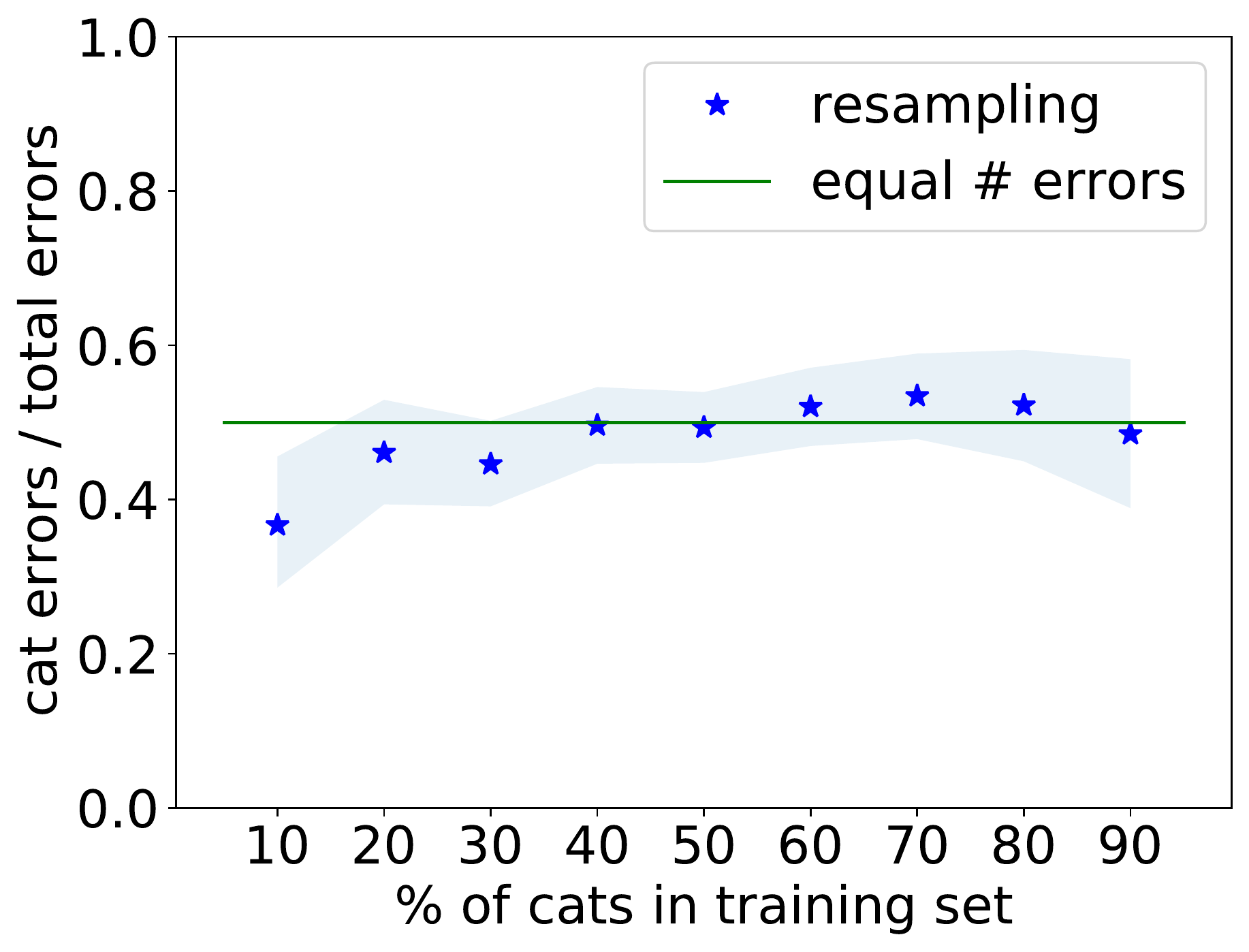}
\end{center}
\caption{We train StyleGAN2 generators of varying bias and test SPE.
  The ground truth images are from the validation set, the observed
  measurements have resolution $4\times 4$. Shaded areas denote 95\%
  confidence intervals. We see that Posterior Sampling satisfies SPE.
  Note that the single failure in the 10\% cat generator is not
  surprising as we are running sequential hypothesis tests on
  non-independent data.}
\label{fig:spe-biases}
\end{figure}

\clearpage

\section{Proofs}\label{app:proofs}
\restate{thm:white-asian}
\begin{proof}
    
 Let $A=A_1 \cup A_2$. We write $p_a = \Pr(x^* \in a)$, $q_{ab} = \Pr(\hat{x} \in b | x^* \in a)$. Using Representation Demographic Parity, with respect first to $\{A,B\}$, then to $\{A_1, A_2, B\}$, we have:
\begin{align*}
    q_{AA} &= q_{BB} \\
    q_{A_1 A_1} &= q_{A_2 A_2} = q_{BB}
\end{align*}
Since $A=A_1 \cup A_2$:
\begin{align*}
    q_{AA} &= \frac{p_{A_1} (q_{A_1 A_1} + q_{A_1 A_2})  + p_{A_2} (q_{A_2 A_1} + q_{A_2 A_2})}{p_{A_1} + p_{A_2}}
\end{align*}
Writing $0 < \frac{p_{A_1}}{p_{A_1} + p_{A_2}} = \alpha < 1$, and replacing $q_{AA}, q_{A_1 A_1}$ and $q_{A_2 A_2}$ by $q_{BB}$, we have:
\begin{align*}
    q_{AA} &= \alpha (q_{A_1 A_1} + q_{A_1 A_2})  + (1- \alpha) (q_{A_2 A_1} + q_{A_2 A_2}) \\
    q_{BB} &= \alpha (q_{BB} + q_{A_1 A_2})  + (1- \alpha) (q_{BB} + q_{A_2 A_2}) \\
    0 &= \alpha q_{A_1 A_2} + (1-\alpha) q_{A_2 A_2}.
\end{align*}
Therefore, an algorithm can satisfy Representation Demographic Parity $\{\{A_1 \cup A_2, B\}, \, \{A_1, A_2, B\}\}$-obliviously if and only if there exists no confusion between $A_1$ and $A_2$, i.e. $q_{A_1 A_2} = 0 = q_{A_2 A_1}$.
\end{proof}

\restate{thm:rdp-general}
  \begin{proof}
     Suppose there exists $x$ such that $\Pr(x) > 0$, and $x_1 \neq x$ such that $\Pr (\hat{x} = x_1) > 0$. Let us split the space into two groups $A$ and $B$, such that both $x$ and $x_1$ belong in $A$. We now further split $A$ into $A_1$ and $A_2$, such that $x_1$ belongs in $A_1$, and $x$ belongs in $A_2$. $A_1$ and $A_2$ now are not perfectly distinguishable, so using the claim above, Representation Demographic Parity is not satisfiable $\{\{A_1 \cup A_2, B\}, \, \{A_1, A_2, B\}\}$-obliviously, so it cannot be satisfiable obliviously. 
  \end{proof} 

\restate{prop:rdp_pr_incompat}
\begin{proof}
Suppose towards a contradiction that both PR and RDP hold, and the distribution is such that $$\Pr(x^* \in c_1) > \frac{1}{2} > \sum_{i \neq 1} \Pr(x^* \in c_i).$$ Since PR holds, $\Pr(\hat{x} \in c_1) = \Pr(x^* \in c_1)$. However, since RDP holds and the algorithm does not reconstruct each class perfectly we have $\alpha = \Pr(\hat{x} \in c_i \mid x^* \in c_i) < 1$ for all the $i$. We now observe the following contradiction. 
\begin{align*}
	\Pr(\hat{x} \in c_1) &\leq \sum_i \Pr(\hat{x} \in c_1 \mid x^* \in c_i) \Pr(x^* \in c_i)  \\
	 &\leq (1-\alpha) \sum_{i \neq 1} \Pr(x^* \in c_i) + \alpha \Pr(x^* \in c_1) \\
    &< (1-\alpha) \Pr(x^* \in c_1) + \alpha \Pr(x^* \in c_1)\\
    & = \Pr(x^* \in c_1). 
\end{align*}

\end{proof} 

\posteriorsamplingonly*
\begin{proof} 
	Let $\cA$ denote a reconstruction algorithm. Given measurements $y$,
	let $Q(U|y)$ denote the probability that the reconstruction from 
	algorithm $\cA$ lies in the measurable set $U$.

	If $\cA$ satisfies CPR, then for
	all measurable $U \subset \R^n,$ and all $y \in \R^m,$ we have
	\begin{align*}
		Q(U|y) = P(U|y).
	\end{align*}

	By the definition of the total variation distance, we have
	\begin{align*}
		TV(Q( \cdot |y), P( \cdot |y)) &= \sup_{U \in \cB(\R^n)} Q(U|y) -
		P(U|y).
	\end{align*}

	Since we have $Q(U|y) = P(U|y)$ for all measurable $U\in \cB(\R^n)$
	and almost all measurements $y \in \R^m,$ we have $TV(Q(\cdot | y),
	P(\cdot|y))=0$ for almost all $y \in \R^m$.

	This shows that the output distribution of $\cA$ must exactly match
	the posterior distribution $P(\cdot | y)$, and hence posterior sampling is the only algorithm that can satisfy obliviousness and CPR. 
\end{proof}

\restate{thm: cpr implies spe}
\begin{proof}
  We want to show that if $\Pr(\wh{x} \in c_i | y) = \Pr(x^* \in c_i |
  y), \forall c_i \in C,$ for almost all $y\in \R^m$, then we have
  $\Pr(\wh{x} \in c_i, x^* \in c_j) = \Pr(\wh{x} \in c_j, x^* \in c_i),
  \forall c_i, c_j \in C$.

  Consider the term $\Pr(\wh{x} \in c_i, x^* \in c_j)$. We can write
  this as an average over $y$, to get:
  \begin{align*}
    \Pr(\wh{x} \in c_i, x^* \in c_j) 
    &= \E_y \Pr( \wh{x} \in c_i, x^* \in c_j | y ).
  \end{align*}
  Note that $\wh{x} \& x^*$ are conditionally independent given $y$.
  This is because $\wh{x}$ is purely a function of $y$. This gives
  \begin{align*}
    \Pr(\wh{x} \in c_i, x^* \in c_j) 
    &= \E_y [ \Pr( \wh{x} \in c_i| y ) \Pr( x^* \in c_j | y )].
  \end{align*}

  If we have CPR with respect to $c_i$ and $c_j$, then we can rewrite
  the above equation as
  \begin{align*}
    \Pr(\wh{x} \in c_i, x^* \in c_j) 
    &= \E_y [\Pr( x^* \in c_i| y ) \Pr( \wh{x} \in c_j | y )].
  \end{align*}
  Using the conditional independence of $x^*, \wh{x}$ given $y$, we
  now have 
  \begin{align*}
    \Pr(\wh{x} \in c_i, x^* \in c_j) 
    &= \E_y \Pr( x^* \in c_i, \wh{x} \in c_j | y ),\\
    &= \Pr( \wh{x} \in c_j, x^* \in c_i ).
  \end{align*}
  This completes the proof.

\end{proof}

\restate{cor:cond_resamp_achieves_pairwise_error}
\begin{proof}
  The proof follows directly from Theorem~\ref{thm:cr_cpr} and
  Theorem~\ref{thm: cpr implies spe}
\end{proof}

\thmcondreweight*

In the special case of 2 classes, the reweighting is very simple: $\lambda_1 = \lambda_2 = \frac{1}{2}$.

\begin{proof} 
We will prove this theorem by contradiction. Before we start, observe that if we scale the mass of $c_i$ by $\lambda_i \geq 0$, we have, 

\begin{align*} 
\alpha_i &:= \Pr(\hat{x} \in C_i \mid x^* \in C_i) \\
&= \E_{y \mid x^* \in C_i} \left[ \frac{ \lambda_i \Pr(x^* \in C_i \mid y) }{\sum_{j} \lambda_j \Pr(x^* \in C_j \mid y) } \right] 
\end{align*}

WLOG, assume $\sum_i \lambda_i = 1$,  this can be done by rescaling the $\lambda_i$'s by their sum. RDP is achieved if all the $\alpha_i$ are equal. Let the smallest $\alpha_i$ when all the $\lambda_i$'s are equal be $\epsilon$. Consider the set $T := \{ \vec{\lambda} \mid \sum_i \lambda_i = 1, \forall i~ \alpha_i  \geq \epsilon \}$. 

Towards a contradiction, suppose no assignment of $\vec{\lambda} \in T$ achieves $f(\vec{\lambda}) :=  \frac{\max_i \alpha_i}{ \min_j \alpha_j} = 1$. Let $r := \min_{\lambda_1, \dots, \lambda_k} f(\vec{\lambda}) > 1$, and let $\vec{\lambda}^*$ be a point which achieves this. $\vec{\lambda}^*$ exists since $f(\vec{\lambda})$ is continuous over $T$, which is compact.

We will show that there exists $\vec{\lambda}' \in T$ such that $f(\lambda') < r$, which contradicts our hypothesis. Let $S := \{ i \in [k] \mid \alpha_i \leq \sqrt{r} \min_i \alpha_i \}$ and, 

$$ \lambda'_i = 
\begin{cases}
      r^{1/4} \lambda^*_i ,& \text{if } i \in S\\
  \lambda_i^*, & \text{otherwise}
\end{cases}
$$

Let $\alpha'_i$ be $\Pr(\hat{x} \in C_i \mid x^* \in C_i)$ where the probability is with respect to the modified distribution. For $i \in S$, 

\begin{align*} 
\alpha'_i &= \E_{y \mid x^* \in C_i} \left[ \frac{ \lambda'_i \Pr(x^* \in C_i \mid y) }{\sum_{j} \lambda'_j \Pr(x^* \in C_j \mid y) } \right] \\
&=r^{1/4} \E_{y \mid x^* \in C_i} \left[ \frac{ \lambda_i \Pr(x^* \in C_i \mid y) }{\sum_{j} \lambda'_j \Pr(x^* \in C_j \mid y) } \right] \\
&\leq r^{1/4} \E_{y \mid x^* \in C_i} \left[ \frac{ \lambda_i \Pr(x^* \in C_i \mid y) }{\sum_{j} \lambda_j \Pr(x^* \in C_j \mid y) } \right]
\end{align*}

Where the last line follows from the fact that $\lambda'_i \geq \lambda_i$ for all $i$, since $r > 1$, which means the denominator only increases. A similar calculation shows that if $i \notin S$, then each $\alpha_i$ is multiplied by a factor between $r^{-1/4}$ and $1$. 

We notice that, by definition of $S$, all the $j$ such that $\alpha_j = \min_i \alpha_i$ are in $S$, and all the $k$ such that $\alpha_k = \max_i \alpha_i$ are not in $S$. This ensures that $\max_i \alpha_i' < \max_i \alpha_i$ and $\min_i \alpha'_i > \min_i \alpha_i \geq \epsilon$. Which, in turn, contradicts the hypothesis that $r$ was the smallest achievable ratio with the original constraints, since we can always renormalize $\lambda'_i$ without affecting the $\alpha_i$. 
\begin{align*}
\frac{\max_i \alpha'_i}{\min_i \alpha'_i} < \frac{\max_i \alpha_i}{\min \{ r^{-1/4} \sqrt{r} \min_i \alpha_i,  \min_i \alpha_i\} } \leq r.        
\end{align*}
\end{proof}

\restate{thm:opt-rce}
\begin{proof}
	The proof follows from Lemma~\ref{lemma:logaccuracy}. Note that
	$H(U|Y)$ is a function of $x^* \& y$ and hence has no dependence on
	the reconstruction algorithm. By the non-negativity of $KL$
	divergence, the representation cross-entropy is minimized when $Q(U_i |
	y) = P(U_i | y)$ for each $i \in [N],$ almost surely over $y$.
\end{proof}

\begin{lemma}
	Let $U : \R^n \to \left\{ c_1, c_2, \cdots , c_k \right\}$ be a function
	that encodes which group contains an image, and assume that the
	groups $c_1, \cdots, c_k \subset \R^n$ are disjoint and form a
	partition of $\R^n$.

	For a reconstruction algorithm $\cA,$ let $Q(c_i|Y)$ denote
	the probability that the reconstruction lies in the set $c_i$ given
	measurements $y$.  Let $P(c_i | y)$ denote the probability that
	$x^*$ lies in $U_i$ conditioned on $y$. 

	Then we have
	\begin{align*}
		RCE(\cA) = H_P\left( U | y \right) + \E_{y}\left[
		KL\left( P(U|y) \| Q(U |y) \right) \right],
	\end{align*}
	where 
	\begin{align*}
		H_P(U|y) & := -\E_{y}\left[ \sum_{i \in [k]} P(c_i |y) \log
	P(c_i|y)\right],\\
	KL(P(U|y) \| Q(U|y)) &:= \sum_{i\in [k]} P(c_i | y) \log \left(
	\frac{P(c_i | y)}{Q(c_i | y)} \right).
	\end{align*}
	
	\label{lemma:logaccuracy}
\end{lemma}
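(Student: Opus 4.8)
The plan is to prove Lemma~\ref{lemma:logaccuracy} by unfolding the definition of $RCE$, reducing the inner probability to $Q(c_i \mid y)$ using disjointness of the partition, and then applying the elementary pointwise-in-$y$ identity ``cross-entropy $=$ entropy $+$ KL divergence.''

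First I would observe that since the reconstruction $\wh{x}$ is produced from $y$ alone (equivalently $\wh{x} \indep x^* \mid y$), the quantity $\Pr_{\wh{x}\mid y}[\wh{x} \in U(x^*)]$ is well-defined: on the event $\{x^* \in c_i\}$ the set $U(x^*)$ equals $c_i$, and the conditional probability that $\wh{x} \in c_i$ given $y$ is exactly $Q(c_i \mid y)$ by definition of $Q$. Hence
\[
RCE(\cA) = -\E_{x^*,y}\bigl[\log Q(U(x^*)\mid y)\bigr].
\]

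Next I would decompose the joint expectation over $(x^*,y)$ by conditioning on $y$ (tower rule) and use that $c_1,\dots,c_k$ partition $\R^n$, so that $U(x^*) = c_i$ precisely on the event $\{x^* \in c_i\}$, which has conditional probability $P(c_i\mid y)$. This gives
\[
\E_{x^*\mid y}\bigl[\log Q(U(x^*)\mid y)\bigr] = \sum_{i\in[k]} P(c_i\mid y)\log Q(c_i\mid y),
\]
so $RCE(\cA) = -\E_y\bigl[\sum_{i} P(c_i\mid y)\log Q(c_i\mid y)\bigr]$, i.e.\ the $y$-averaged cross-entropy between the distributions $P(\cdot\mid y)$ and $Q(\cdot\mid y)$ on $\{c_1,\dots,c_k\}$. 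Finally, for each fixed $y$ I would apply the algebraic identity
\[
-\sum_i P(c_i\mid y)\log Q(c_i\mid y) = -\sum_i P(c_i\mid y)\log P(c_i\mid y) + \sum_i P(c_i\mid y)\log\frac{P(c_i\mid y)}{Q(c_i\mid y)},
\]
and take the expectation over $y$, obtaining exactly $RCE(\cA) = H_P(U\mid y) + \E_y[KL(P(U\mid y)\,\|\,Q(U\mid y))]$.

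I do not expect a genuine obstacle here. The only points requiring a little care are the measure-theoretic justifications that $\Pr_{\wh{x}\mid y}[\wh{x}\in U(x^*)]$ may be evaluated as $Q(c_i\mid y)$ on $\{x^*\in c_i\}$ and that the tower-rule decomposition is valid (both following from $\wh{x}\indep x^*\mid y$ together with disjointness of the partition), plus adopting the usual $0\log 0 = 0$ convention so that the entropy and KL terms are well-defined (and noting that $Q(c_i\mid y)=0$ on a set of positive $P(c_i\mid y)$-mass would force $RCE(\cA)=\infty$, in which case the claimed identity still holds with $\infty = \infty$). Once the lemma is in hand, Theorem~\ref{thm:opt-rce} is immediate: $H_P(U\mid y)$ does not depend on $\cA$, and by non-negativity of KL the expected KL term is minimized, and equals $0$, exactly when $Q(c_i\mid y) = P(c_i\mid y)$ for all $i$ almost surely over $y$, which is precisely CPR on $C$.
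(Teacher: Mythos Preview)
Your proposal is correct and follows essentially the same approach as the paper: both condition on $y$ via the tower rule, use the partition structure to rewrite $\Pr_{\wh{x}\mid y}[\wh{x}\in U(x^*)]$ as $Q(c_i\mid y)$ on $\{x^*\in c_i\}$, arrive at the averaged cross-entropy $-\E_y[\sum_i P(c_i\mid y)\log Q(c_i\mid y)]$, and then split it into entropy plus KL. Your version is slightly more careful in flagging the conditional independence $\wh{x}\indep x^*\mid y$, the $0\log 0$ convention, and the $\infty=\infty$ edge case, none of which the paper makes explicit, but the underlying argument is the same.
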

\paragraph{Remark:}
There is a slight abuse of notation in the lemma. Since $U$ is a function of $x^*,$ when treating $x^*$ as a random variable, we also treat $U$ as a random variable.

\begin{proof}
	By the definition of $RCE$ and the tower property of
	expectations, we have
	\begin{align*}
		-RCE(\cA)=&\E_{x^*,y} \log \Pr \left[ \wh{x} \in U( x^*) | y \right] =
		\E_{y} \E_{x^*|y}\left[ \log\left( \Pr\left[ \wh{x} \in U(x^*) | y
		\right]\right)\right],\\
		=& \E_{y} \E_{x^*|y}\left[ \sum_{i\in [N]} \bm{1}\left\{ x^* \in
		U_i \right\}\log\left( \Pr\left[ \wh{x} \in U(x^*) | y\right]
		\right) \right],\\
		=& \E_{y} \E_{x^*|y}\left[ \sum_{i\in [N]} \bm{1}\left\{ x^* \in
		U_i \right\}\log\left( \Pr\left[ \wh{x} \in U_i | y
		\right]\right)\right],\\
		=& \E_{y} \E_{x^*|y}\left[ \sum_{i\in [N]} \bm{1}\left\{ x^* \in
		U_i \right\}\log\left(Q(U_i | y) \right)\right],\\
		=& \E_{y} \left[ \sum_{i \in [N]}P(U_i | y) \log\left(Q(U_i | y)
		\right)\right].  (*)
	\end{align*}
	where the second line follows because the $U_i$s form a partition,
	the third line follows since $\wh{x} \in U(x^*)$ is equivalent
	to $\wh{x} \in U_i$ if we know that $x^* \in U_i$, the fourth line
	follows from the definition of $Q(U_i | y )$ and the last line
	follows from linearity of expectation.

	Now we can multiply and divide $P(U_i|y)$ within the $\log$ term
	above.  This gives 
	\begin{align*}
		(*)=& \E_{y} \left[ \sum_{i\in [N]} P\left( U_i | y \right)
		\log\left(\frac{Q(U_i |y) P(U_i |y)}{P(U_i|y)}\right)\right], \\
		=& \E_{y} \left[ \sum_{i\in [N]} P\left( U_i | y \right)
		\log\left( P(U_i | y)\right)\right] \\
		+& \E_{y} \left[ \sum_{i\in [N]} P\left( U_i | y \right)
		\log\left(\frac{Q(U_i |y) }{P(U_i|y)}\right)\right], \\
		=& - H\left( U | y \right) - \E_{y}\left[ KL\left( P(U|y) \|
		Q(U|y) \right) \right] .
	\end{align*}

	This concludes the proof.

\end{proof}

\section{Langevin Dynamics}\label{app:langevin}
\subsection{StyleGAN2}
We want to sample from the distribution $p(x|y)$ induced by a
StyleGAN2. Note that sampling from the marginal distribution $p(x)$ of
a StyleGAN2 is achieved by sampling a latent variable $z\in\R^{512},$
and 18 noise variables $n_i \in \R^{d_i}$ of varying sizes, and
setting $x = G(z,n_1,\cdots, n_{18}).$ Hence, we can sample from
$p(x|y)$ by sampling $\wh{z}, \wh{n}_1, \cdots, \wh{n}_{18},$ from
$p(z,n_1,\cdots,n_{18}|y),$ and setting $\wh{x} =
G(\wh{z},\wh{n}_1, \cdots,\wh{n}_{18}).$ 

The prior of the latent and noise variables is a standard Gaussian
distribution.  Since we know the prior distribution of these
variables, if we know the distribution of the meaurement process, we
can write out the posterior distribution.  

For the measurement process we consider, we have $y=Ax^*$, where $A$
is a blurring matrix of appropriate dimension. Note that in the
absence of noise, posterior sampling must sample solutions that
exactly satisfy the measurements. However, this is difficult to
enforce in practice, and hence we assume that there is some small
amount of Gaussian noise in the measurements. In this case, the
posterior distribution becomes:
\begin{align}
  p(z,n_1,\cdots,n_{18}|y) & \propto p(y|z,n_1,\cdots,n_{18})
  p(z,n_1,\cdots,n_{18}),\\
  \Leftrightarrow \log p(z,n_1,\cdots,n_{18}|y) &=
  -\frac{\norm{y-AG(z,n_1, \cdots,n_{18})}^2}{2\sigma^2/m} -
  \norm{z}^2/2 - \sum_{i=1}^18 \norm{n_i}^2/2 + c(y),
\end{align}
where $c(y)$ is an additive constant which depends only on $y$.

Now, Langevin dynamics tells us that if we run gradient ascent on the
above log-likelihood, and add noise at each step, then we will sample
from the conditional distribution asymptotically. Please note that we
sample $z$ and \emph{all noise variables} $n_1,\cdots,n_{18}.$

In our experiments, we do 1500 gradient steps. 
In practice, we replace the $\sigma$ in the equation above with
$\sigma_t$, where $t$ is the iteration number. When the measurements
have resolution $8\times 8$ or $4\times 4$, we find that
$\sigma_1=1.0,\sigma_{1500}=0.1$ works best. When the resolution of the
measurements is $2\times 2$ or $1\times 1$, we find that
$\sigma_1=1.0,\sigma_{1500}=0.01$ works best. We
change the value of $\sigma_t$ after every 3 gradient steps, such that
$\sigma_1, \sigma_4, \sigma_7,\cdots,\sigma_{1497}$ form a geometrically
decreasing sequence. The learning rate $\gamma_{1500}$ is also tuned to be
a decreasing geometric sequence, such that $\gamma_t=5\cdot 10^{-6}.$
Please see~\cite{song2019generative} for the equations specifying the
learning rate tuning, and the logic behind it.

We also find that adding a small amount of noise corresponding to
$\sigma_{1500}$ in the measurements helps Langevin mix better.

We note that our approach is different from prior
work~\cite{karras2020analyzing,menon2020pulse}, which optimizes a
function of our variable $z$, and a subset of the noise variables.

\paragraph{NCSNv2}
The NCSNv2 model~\cite{song2020improved} has been designed such that
sampling from the marginal distribution requires Langevin dynamics.
This model is given by a function $s(x;\sigma)$, which outputs $\nabla
\log p_\sigma(x),$ where $p_\sigma(x)$ is the distribution obtained by
convolving the distribution $p(x)$ with Gaussian noise of variance
$\sigma^2$. That is, $p_\sigma (x) = (p* \cN(0,\sigma^2))(x)$.

It is easy to adapt the NCSNv2 model to sample from posterior
distributions, see~\cite{song2020improved} for inpainting examples. In
our super-resolution experiments, one can compute the gradient of
$p(y|x)$, to get the following update rule for Langevin dynamics:
\begin{align}
  x_{t+1} \leftarrow x_{t} + \gamma_t ( s(x_t; \sigma_t) - A^T(Ax_t -
  y)/\sigma_t^2 ) + \sqrt{2\gamma_t} \xi_t,
\end{align}
where $A$ is the blurring matrix, and $\xi_t \sim \cN(0,I_n)$ is
i.i.d. Gaussian noise sampled at each step. We use the default values
of noise and learning rate specified in
\url{https://github.com/ermongroup/ncsnv2/blob/master/configs/ffhq.yml}.
That is, $\sigma_1=348, \sigma_{6933}=0.01$, and $\gamma_{6933}=9\cdot
10^{-7}.$ Note that the value of $\sigma,\gamma$ changes every 3
iterations, and both $\gamma_t$ and $\sigma_t$ decay geometrically.
See~\cite{song2020improved} for specific details on how these are
tuned.

\section{Code}
All code and generative models, along with hyperparameters and README
are available at
\url{https://github.com/ajiljalal/code-cs-fairness}.

\end{document}
